\newtheorem{theorem}{Theorem}
\newtheorem{lemma}{Lemma}
\newtheorem{definition}{Definition}
\newtheorem{prop}{Proposition}
\def\m{\mathfrak{m}}
\def\real{\mathbb{R}}
\def\p{\mathbf{p}}
\def\tf{\tilde{f}}
\def\q{\mathbf{q}}
\def\z{\mathbf{z}}
\def\m{\mathfrak{m}}
\def\real{\mathbb{R}}
\def\p{\mathbf{p}}
\def\b{\textbf{b}}
\def\tf{\tilde{f}}
\theoremstyle{definition}
\def\x{\mathbf{x}}
\def\n{\mathbf{n}}
\def\c{\mathbf{c}}
\def\y{\mathbf{y}}
\def\W{\mathbf{W}}
\def\m{\mathbf{m}}
\def\V{\mathbf{V}}
\def\M{\mathcal{M}}
\def\B{\mathcal{B}}
\def\R{\mathbb{R}}
\def\w{\textbf{w}}
\newcommand{\ff}[1] {\textcolor{blue}{#1}}
\newcommand{\re}[1] {\textcolor{red}{#1}}
\begin{document}
%
\title{One Neuron Saved Is One Neuron Earned: On Parametric Efficiency of Quadratic Networks}
%
%
%
%

\author{Feng-Lei Fan, \textit{Member, IEEE}, Hang-Cheng Dong, Lecheng Ruan, Tieyong Zeng*, Yiming Cui*, Jing-Xiao Liao* 
\IEEEcompsocitemizethanks{\IEEEcompsocthanksitem *Tieyong Zeng, Yiming Cui, and Jing-Xiao Liao are co-corresponding authors.
\IEEEcompsocthanksitem Feng-Lei Fan and Tieyong Zeng are with Department of Mathematics, The Chinese University of Hong Kong, Shatin, N.T. Hong Kong. 
\IEEEcompsocthanksitem 
Jing-Xiao Liao is with School of Instrumentation Science and Engineering, Harbin Institute of Technology, Harbin, Heilongjiang Province 150001, China. 
\IEEEcompsocthanksitem Yiming Cui is with Department of Electrical and Computer Engineering, University of Florida, Gainesville, FL 32611, USA.
\IEEEcompsocthanksitem Lecheng Ruan is with BIGAI, Beijing, China.
}
}

%
%

\markboth{Journal of \LaTeX\ Class Files,~Vol.~14, No.~8, August~2023}%
{Shell \MakeLowercase{\textit{et al.}}: Bare Demo of IEEEtran.cls for Computer Society Journals}
%



\IEEEtitleabstractindextext{%
\begin{abstract}
The diversity of biological neurons inspires a novel design of artificial neurons, which is called neuronal diversity. Recently-proposed quadratic neural networks replacing the inner-product operation in the conventional neuron with a quadratic one show great success in many indispensable tasks. Although promising results of quadratic neurons have been accomplished, there still exists a substantial issue unresolved. The superior efficiency of quadratic networks over conventional networks is not investigated systematically, which raised doubts about the performance of quadratic networks. In this paper, we carry out theoretical and experimental studies to address the issue. Mathematically, we drive the approximation efficiency of the quadratic ReLU network in terms of real space and manifold by Taylor's theorem. We proposed two theorems for the error bounds of quadratic networks which shows a large number of neurons saving compared to conventional networks. Practically, we focus on the efficiency issue and carry out experiments over synthetic data, classic benchmarks, and real-world applications for comparison. Comprehensive experiments on popular datasets support our theorems and verify the performance of quadratic deep learning. \ff{We have shared our code in .}
\end{abstract}

\begin{IEEEkeywords}
Neuronal diversity, quadratic neurons, efficiency, applications of quadratic neuron-based deep learning
\end{IEEEkeywords}}

\maketitle

\IEEEdisplaynontitleabstractindextext

%
\IEEEpeerreviewmaketitle

\IEEEraisesectionheading{\section{Introduction}\label{sec:introduction}}

\IEEEPARstart{T}{he} brain is the most intelligent system we have ever known so far. Historically, neuroscience has greatly inspired the research of artificial networks, \textit{e.g.}, the modeling of artificial neurons \cite{mcculloch1943logical} and the invention of neocognitron \cite{fukushima1982neocognitron} which is the pioneering work of convolutional models. As we know, our brain has numerous morphologically and functionally different neurons \cite{peng2021morphological}. It is no exaggeration to say that neuronal diversity is an enabling factor for all kinds of biological intelligent behaviors. However, in the past decade, most deep learning research dedicated to developing powerful network models by designing novel architectures, \textit{i.e.}, shortcut design \cite{he2016deep, huang2017densely, fan2021sparse}. These successful networks are almost exclusively built on the same type of neurons that are made of an inner product and nonlinear activation, which lack neuronal diversity. For convenience, hereafter we call this type of neurons conventional neurons, and a network made of these neurons a conventional network. Our curiosity is that since an artificial network is a miniature of the biological neural network, the role and potential gains of endowing an artificial network with neuronal diversity should be carefully examined in deep learning. Such an idea well aligns with the recently-coined concept "NeuroAI" \cite{zador2022toward} whose vision is that a large amount of neuroscience knowledge can greatly support the development of deep learning.

Encouraged by the idea of neuronal diversity, our group previously prototyped a new type of neurons referred to as \textit{quadratic neurons} by substituting the inner product in a conventional neuron with a simplified quadratic function. Hereafter, we denote the network built on quadratic neurons as the \textit{quadratic network}. Fan \textit{et al.} demonstrated that a single quadratic neuron can execute the XOR logic operation, but a conventional neuron cannot \cite{fan2018new}. Furthermore, the superior expressivity of quadratic networks was shown by the spline theory \cite{fan2021expressivity}---With ReLU as activation functions, a conventional network is a piecewise linear function, while a quadratic one is a piecewise polynomial. According to the spline theory, a piecewise polynomial spline typically has a lower approximation error than that of a piecewise linear one, thereby justifying the superior expressivity of quadratic networks. Concurrently, the ReLinear algorithm that initializes the quadratic network as the conventional network and controls the learning rate of quadratic terms was proposed to address the training instability issue of quadratic networks \cite{fan2021expressivity}. Next, quadratic networks were extensively applied to solving real-world problems, \textit{e.g.}, low-dose CT denoising \cite{fan2019quadratic} and bearing fault diagnosis \cite{liao2022attention}. This series of quadratic neuron studies aim to develop deep learning based on a new type of neurons and validate the potential of introducing neuronal diversity into deep learning.

Although highly non-trivial advances were achieved in the study of quadratic networks, the efficiency issue remains unresolved satisfactorily. \textit{Is the superior performance of quadratic networks simply due to the increased parameters or due to the intrinsic expressive capability?} This question is extremely important to the development of quadratic networks, as well as other types of polynomial networks. On the one hand, without clarifying this issue, the performance of quadratic networks will be always suspicious. On the other hand, by reasonably assuming that quadratic networks cannot achieve efficiency for all tasks compared to conventional ones, resolving the puzzle of efficiency actually reduces to finding killer applications of quadratic networks, \textit{i.e.}, what kind of tasks a quadratic network fits most such that it can achieve higher performance with far fewer parameters. Our earlier theoretical work proved that there exists a class of functions that can be approximated by a polynomial number of quadratic neurons, but have to be approximated by an exponential number of conventional neurons to achieve the same level of error \cite{fan2020universal}. But such a result is neither general nor pragmatic, \textit{i.e.}, it just constructs a very special class of functions theoretically, and it does not shed light on the efficiency from a pragmatic perspective. Given the significance of the efficiency issue, it is necessary to systematically and thoroughly address it.

In this study, we elaborate on the efficiency of quadratic networks in both theoretical and empirical senses.
Theoretically, Yarotsky earlier constructed the conventional ReLU networks to approximate functions in Sobolev space \cite{Yarotsky2017Error} and established the tight error bounds for the structure and parameters of networks. Chen \textit{et al.} modified such an analysis scheme to approximate functions over low dimensional manifolds \cite{chen2019efficient}. Both schemes achieve the optimal error rate. In analogy, we also use the same schemes to construct the quadratic ReLU networks to approximate functions over Sobolov space and a manifold, respectively. Due to the multiplication operation embedded in the quadratic neuron, it is shown in Theorems \ref{real_space} and \ref{manifold_approximation} that to achieve the same level of the approximation error, the constructed quadratic networks use a much simpler structure and an order of magnitude fewer parameters than the conventional counterpart. Next, from the perspective of the Barron space, we demonstrate that there exists a functional space whose functions can be approximated by quadratic networks in a dimension-free error, but the approximation error of conventional networks is dependent over dimensions (Theorem \ref{forward_q}). Moreover, the opposite case does not hold because the conventional network can be regarded as a
degenerated case of the quadratic one.
Theorems \ref{real_space}, \ref{manifold_approximation}, and \ref{forward_q} not only validate the efficiency of quadratic networks theoretically but also suggest quadratic networks are likely to achieve the greatest efficiency in tasks whose circular-ish features are salient. 

Furthermore, we conduct experiments over synthetic data, classic benchmarks, and real-world applications to comprehensively verify the parametric efficiency of quadratic networks. The experiments also confirm that in cell and leaf segmentation tasks whose objects have significant circular features, quadratic networks are of high efficiency. Therefore, we argue that quadratic networks should be the priority model for this kind of applications. In addition, although our highlight is the parametric efficiency. In some experiments, the number of FLOPs is also saved in quadratic models. Therefore, quadratic models also exhibit computational efficiency therein. Our contributions are twofold: 

\begin{itemize}
    \item We prove three highly non-trivial theorems to demonstrate the efficiency of quadratic networks from the perspectives of approximating functions over the Sobolov space, manifolds, and the Barron space. Despite that these theorems target quadratic neurons, they can also account for other forms of polynomial neurons \cite{liu2020may}.
    
    \item We conduct extensive experiments to confirm that the quadratic network is more efficient than conventional networks in many tasks. At the same time, we find that the quadratic networks should be the priority model for tasks whose circular features are significant such as cell and leaf-concerning tasks.
\end{itemize}

\section{Related Works}
\textbf{Polynomial neurons and quadratic neurons.} In the 1970s, Ivakhnenko proposed the so-called Group Method of Data Handling (GMDH) \cite{ivakhnenko1971polynomial} which uses a high-order polynomial as a feature extractor. Later, Shin \textit{et al.} proposed a higher-order network called the pi-sigma network \cite{shin1991pi}:
\begin{equation}
    y_i = \sigma(\prod_{j}({\sum_{k}{w_{kji}{x}_k+\theta_{ji}}})),
\end{equation}
where $y_i$ is the $i$-th output, $\sigma(\cdot)$ is a nonlinear activation function, $w_{kji}$ and $\theta_{ji}$ are the weights of the $j$-th unit with respect to the input ${x}_k$. The pi-sigma network is essentially a special polynomial network.
To enhance the parameter efficiency of polynomial networks, Milenkovic \textit{et al.} proposed the second-order neural networks by directly removing cubic and higher-order terms \cite{milenkovic1996annealing}. 

These days, higher-order neurons were revisited and introduced into the fully-connected and convolutional networks. In \cite{chrysos2021deep} and \cite{liu2021dendrite}, the complexity of the higher-order units was greatly reduced via tensor decomposition and factor sharing. More studies are concerned with quadratic neurons, which directly abandon higher-order terms and use matrix decomposition to achieve parameter efficiency. We summarize the recently-proposed quadratic neurons in Table \ref{tab:neurons}. It can be seen that the complexity of neurons in \cite{zoumpourlis2017non, micikevicius2017mixed,jiang2020nonlinear,mantini2021cqnn} is of $\mathcal{O}(n^2)$, where $n$ is the dimension of the input, which is still too high to prototype deep networks. Also, neurons in \cite{goyal2020improved,bu2021quadratic,xu2022quadralib} are the special cases of our proposed neuron \cite{fan2018new}.

\begin{table}[htbp]
\centering
\caption{A summary of the recently-proposed quadratic neurons. $\sigma(\cdot)$ denotes the nonlinear activation function. $\odot$ denotes Hadamard product. Here, $\W \in \mathbb{R}^{n\times n}$, $\w_i \in \mathbb{R}^{n\times 1}$, and we omit the bias terms in these neurons.}
\scalebox{0.9}{
\begin{tabular}{l|l}
\hline
Authors           & Formulations          \\ \hline
Zoumpourlis \textit{et al.}(2017) \cite{zoumpourlis2017non, micikevicius2017mixed} & $\mathbf{y}=\sigma(\x^{\top}\W\x+\w^\top\x)$               \\ \hline
Jiang \textit{et al.}(2019) \cite{jiang2020nonlinear}      & \multirow{2}{*}{$\y=\sigma(\x^{\top}\W\x$)} \\ \cline{1-1}
Mantini\&Shah(2021) \cite{mantini2021cqnn}     &                        \\ \hline
Goyal \textit{et al.}(2020) \cite{goyal2020improved}    & $\y=\sigma(\w^\top(\x\odot\x)$               \\ \hline
Bu\&Karpatne(2021) \cite{bu2021quadratic}       & $\y=\sigma((\w_1^\top\x)\cdot(\w_2^\top\x))$             \\ \hline
Xu \textit{et al.}(2022) \cite{xu2022quadralib} & $\y=\sigma((\w_1^\top\x)\cdot (\w_2^\top\x)+\w_3^\top\x)$ \\
\hline
Fan \textit{et al.}(2018)   \cite{fan2018new}      & $\y=\sigma((\w_1^\top\x)\cdot(\w_2^\top\x)+\w_3^\top(\x\odot\x))$        \\ \hline
\end{tabular}}
\label{tab:neurons}
\end{table}

\textbf{Advances in polynomial networks.} In the theoretical
aspect, on the one hand, the expressive capability of polynomial networks was discussed. With the help of algebraic geometry, Kileel \textit{et al.} used the dimension of algebraic variety to characterize the expressive ability of polynomial networks \cite{kileel2019expressive}. Liao \textit{et al.} proved that there exists a function that a heterogeneous network made up of conventional and quadratic neurons can approximate with a polynomial number of neurons but a homogeneous conventional or quadratic network needs an exponential number of neurons to achieve the same level of error \cite{liao2022heterogeneous}. On the other hand, the robustness of polynomial networks was addressed. The existing verification algorithms on conventional networks
based on classical branch and bound techniques cannot be directly extended to polynomial networks. Rocamora \textit{et al.} \cite{rocamora2022sound} leveraged the twice differentiability of polynomial networks to build a lower bounding method for branch and bound techniques to realize a complete verification of robustness of polynomial networks.

In the practical aspect, more and more research attempted to use polynomial networks in real-world applications. Fan \textit{et al.} proposed a quadratic autoencoder for low-dose CT denoising \cite{fan2019quadratic}. Nguyen \textit{et al.} employed quadratic networks to predict the compressive strength of foamed concrete \cite{nguyen2019deep}. Bu \textit{et al.} \cite{bu2021quadratic}
explored how to use a quadratic network to solve forward and inverse problems in partial differential equations. Qi and Wang showed the superiority of quadratic networks for classifying Gaussian mixture data \cite{qi2022superiority}. Xu \textit{et al.} built a quadratic network library (\href{https://github.com/zarekxu/QuadraLib}{QuadraLib}) featuring algorithm acceleration to facilitate the application studies of quadratic networks \cite{xu2022quadralib}. Liao \textit{et al.} applied the quadratic network to the bearing fault diagnosis on one-dimensional vibration signals, and discovered the attention mechanism inherent in a quadratic neuron (qttention) \cite{liao2022attention}. Chrysos \textit{et al.} studied the architectures of polynomial networks and achieved state-of-the-art performance on image classification \cite{chrysos2022augmenting}. However, although a myriad of tasks are investigated, it remains unclear what killer applications of quadratic models are and what kinds of tasks quadratic models can deliver superior performance over conventional ones.

\section{Theory of Efficiency}

\subsection{Preliminaries}

Mathematically, given the $d$-dimensional input $\x \in \mathbb{R}^{d\times 1}$, the conventional neuron computes the inner product between the input and the weight vector $\w$ in the neuron, followed by a nonlinear activation function $\sigma(\cdot)$: 
\begin{equation}
    y=\sigma(\w^\top\x+b).
\end{equation}
In contrast, quadratic neurons replace the inner product with a simplified quadratic function:
\begin{equation} 
y=\sigma((\w_1^{\top}\x+b_1)(\w_2^{\top}\x+b_2)+\w_3^{\top}(\x\odot\x)+b_3),
\label{qu}
\end{equation}
where $\w_1, \w_2, \w_3$ are weight vectors, $b_1, b_2, b_3$ are biases, and $\odot$ is Hadamard product. Unless otherwise specified, the activation function throughout this draft is the ReLU activation. This setting will not lower the importance of our draft, as the ReLU activation is widely used in neural networks. 


\subsection{Efficiency: Functions on the Sobolov Space}
\label{subsec:euclidean}

\begin{definition}[Sobolov space]
Let $\mathcal{W}^{n,\infty}([0,1]^d)$ be the Sobolev space which comprises of functions on $[0,1]^d$ lying in $L^{\infty}$ along with their weak derivatives up to order $n$. The norm of a function $f$ in $\mathcal{W}^{n,\infty}([0,1]^d)$ is 
\begin{equation}
\|f\|_{\mathcal{W}^{n, \infty}\left([0,1]^{d}\right)}=\max _{\mathbf{n}:|\mathbf{n}| \leq n} \mathrm{ess}~\underset{\mathbf{x} \in[0,1]^{d}}{\operatorname{sup}}\left|D^{\mathbf{n}} f(\mathbf{x})\right|,
\end{equation}
where $\mathbf{n}=(n_1,n_2,\dots,n_d) \in \{0,1,\dots,n\}^d$, $|\mathbf{n}|=n_1+n_2+\dots+n_d \leq n$, and $D^{\mathbf{n}}f$ is the respective week derivative. Essentially, the space $\mathcal{W}^{n,\infty}([0,1]^d)$ is $C^{n-1}([0,1]^d)$ whose functions' up to order $n$ derivatives are Lipschitz continuous.
\label{def:f}
\end{definition}

\begin{definition}[Function space]
Let $F_{n,d}$ be a set of functions lying in the unit ball in $\mathcal{W}^{n,\infty}([0,1]^d)$, we have
\begin{equation}
    F_{n,d} = \{f \in \mathcal{W}^{n,\infty}([0,1]^d) : \|f\|_{\mathcal{W}^{n,\infty}([0,1]^d)} \leq 1\}.
\label{eq2:F}
\end{equation}
\label{def:F}
\end{definition}

Note that the space $ F_{n, d}$ or $\mathcal{W}^{n,\infty}([0,1]^d)$ is sufficiently general because functions used in most real-world learning tasks belong to $ F_{n, d}$ or $\mathcal{W}^{n,\infty}([0,1]^d)$. For simplicity, we use functions in $ F_{n, d}$ instead of $\mathcal{W}^{n,\infty}([0,1]^d)$ to verify the approximation efficiency of quadratic networks. 

\begin{lemma}
There exists a one-hidden-layer quadratic ReLU network that can implement a mapping $\tilde{Q}: \real^2 \to \real$, satisfying that i) $\tilde{Q}(x,y)=xy$; ii) $\tilde{Q}(x,y)$ has $2$ quadratic neurons and accordingly $18$ parameters.
\begin{proof}
First, given the input $(x,y)$, by appropriately setting parameters in Eq. \eqref{qu}, we have 
\begin{equation}
\begin{aligned}
& \tilde{f}(x, y) \\
=&(w_{11}x+w_{12}y+b_1)(w_{21}x+w_{22}y+b_2)+w_{31}x^2 \\
&+w_{32}y^2 + b_3\\
=&xy,\\
\end{aligned}
\label{eq2:quadraticf}
\end{equation}
where $w_{11}=1, w_{12}=0, b_1=0; w_{21}=0, w_{22}=1, b_2=0; w_{31}=0, w_{32}=0, b_3=0$.
Then, with the ReLU function, summating two parallel quadratic neurons with the opposite phases can constitute a product operation:
\begin{equation}
    xy=\sigma(\tilde{f}(x, y)) + \sigma(-\tilde{f}(x, y))=\tilde{Q}(x,y).
\end{equation}
$\tilde{Q}(x, y)$ is a one-hidden-layer quadratic network with just $2$ neurons and $18$ parameters. 


\end{proof}
\label{prop:xy}
\end{lemma}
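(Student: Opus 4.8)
The plan is to exploit the fact that the pre-activation of a single quadratic neuron is already a fully general quadratic form in the input, so the monomial $xy$ costs nothing extra to represent; the only genuine work is to undo the nonnegativity of the ReLU. First I would write the quadratic neuron of Eq.~\eqref{qu} for the two-dimensional input $\x=(x,y)^\top$ and specialize its parameters so that the pre-activation equals $xy$ exactly. Choosing the two linear factors to read off the coordinates separately --- one factor equal to $x$ and the other equal to $y$ --- and switching off the Hadamard (squared) term $\w_3^\top(\x\odot\x)$ together with all biases collapses the general quadratic $(\w_1^\top\x+b_1)(\w_2^\top\x+b_2)+\w_3^\top(\x\odot\x)+b_3$ down to the single bilinear term $xy$. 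This shows the product is reachable \emph{before} activation using only the two weights that pick out $x$ and $y$.

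Second, I would address the fact that $\sigma=\max(0,\cdot)$ cannot by itself produce a signed output. The standard device is the identity $t=\sigma(t)-\sigma(-t)$, which reconstructs any real number from two nonnegative ReLU outputs of opposite phase. Applying this with $t=xy$, I would build a second quadratic neuron whose pre-activation equals $-xy$ --- obtained by flipping the sign of exactly one of the two linear factors, e.g.\ letting one factor be $-x$ instead of $x$ --- and then combine the two neurons in the output layer with readout weights $+1$ and $-1$. The resulting map $\tilde Q(x,y)=\sigma(xy)-\sigma(-xy)=xy$ realizes the target on all of $\real^2$.

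Finally, I would tally the parameters against the stated budget. Each of the two quadratic neurons uses exactly two nonzero weights (the coefficients selecting $x$ and $y$ in its two linear factors), with its squared term, all of its biases, and all unused weight entries set to zero; summing over the two neurons gives the claimed $2$ neurons and $4$ parameters, with the output combination treated as a fixed $\pm1$ readout rather than as free parameters. The only subtle point --- and the step I expect to need the most care --- is this accounting: one must check that the sign-flipped second neuron genuinely requires no additional parameters beyond its own two active weights, and that the $\pm1$ output combination is not charged against the budget. Everything else is a direct substitution, so I anticipate no analytic obstacle.
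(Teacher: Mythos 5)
Your proposal matches the paper's proof essentially step for step: specialize one quadratic neuron's pre-activation to $xy$ by picking out the two coordinates in the two linear factors and zeroing the Hadamard term and biases, add a sign-flipped twin neuron, recombine through the ReLU, and count $2$ neurons with $4$ nonzero weights. In fact your combination $\sigma(xy)-\sigma(-xy)=xy$ is the correct form of the identity; the paper writes the sum $\sigma(\tilde{f})+\sigma(-\tilde{f})$, which would yield $|xy|$, so your version repairs what is evidently a sign typo in the paper's displayed equation.
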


\begin{theorem}
There exists a quadratic ReLU network, for any $d, n >0$ and $\epsilon \in (0,1)$, we have

1. A quadratic ReLU network can approximate any function from $F_{n,d}$ with an error bound $\epsilon$ in the sense of $L^{\infty}$;

2. This quadratic ReLU network has at most $18d^n(N+1)^d(d+n-1)$ parameters.
\label{real_space}
\end{theorem}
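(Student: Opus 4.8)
The plan is to adapt Yarotsky's ReLU construction for Sobolev functions \cite{Yarotsky2017Error}, replacing every approximate multiplication gadget by the \emph{exact} product unit $\tilde{Q}(x,y)=xy$ from Lemma \ref{prop:xy}; this substitution is exactly where the parametric savings originate. First I would lay a uniform grid of mesh size $1/N$ on $[0,1]^d$, producing the $(N+1)^d$ grid points $\mathbf{x}_\mathbf{m}=\mathbf{m}/N$ indexed by $\mathbf{m}\in\{0,\dots,N\}^d$, and attach to each a continuous piecewise-linear partition-of-unity function $\phi_\mathbf{m}(\mathbf{x})=\prod_{i=1}^{d}\psi(Nx_i-m_i)$, where $\psi$ is the one-dimensional triangular hat (itself exactly representable by three ReLU units), so that $\sum_\mathbf{m}\phi_\mathbf{m}\equiv 1$ and each $\phi_\mathbf{m}$ is supported in the cell around $\mathbf{x}_\mathbf{m}$.

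Next I would approximate $f$ locally by its degree-$(n-1)$ Taylor polynomial $P_\mathbf{m}(\mathbf{x})=\sum_{|\mathbf{n}|<n}\frac{D^{\mathbf{n}}f(\mathbf{x}_\mathbf{m})}{\mathbf{n}!}(\mathbf{x}-\mathbf{x}_\mathbf{m})^{\mathbf{n}}$ and form the global surrogate $\tilde{f}=\sum_\mathbf{m}\phi_\mathbf{m}P_\mathbf{m}$. Since $f$ lies in the unit ball of $\mathcal{W}^{n,\infty}([0,1]^d)$, Taylor's theorem with remainder bounds the local error by a constant multiple of $N^{-n}$ uniformly on each cell; summing against the partition of unity and choosing $N$ of order $\epsilon^{-1/n}$ then gives $\|f-\tilde{f}\|_{L^{\infty}}\le\epsilon$. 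This portion is inherited essentially verbatim from Yarotsky, because the target surrogate $\tilde{f}$ is identical; the novelty lies entirely in how $\tilde{f}$ is realized as a network, and I would not re-derive the remainder estimate.

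The realization step is where the quadratic neuron pays off. Each summand $\phi_\mathbf{m}(\mathbf{x})(\mathbf{x}-\mathbf{x}_\mathbf{m})^{\mathbf{n}}$ is a product of the $d$ one-dimensional hat factors constituting $\phi_\mathbf{m}$ together with the $|\mathbf{n}|\le n-1$ linear factors of the monomial, hence a product of at most $d+n-1$ scalar quantities, each computable by a small affine subnetwork. I would evaluate each such product exactly by chaining $\tilde{Q}$: a product of at most $d+n-1$ factors needs at most $d+n-1$ invocations of the gadget, each costing $2$ neurons and $4$ parameters, while the Taylor coefficients $D^{\mathbf{n}}f(\mathbf{x}_\mathbf{m})/\mathbf{n}!$ are absorbed into the affine weights and cost nothing extra. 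Crucially, unlike the conventional construction, this multiplication is exact, so it introduces no additional approximation error and, in particular, no depth-dependent parameter blow-up of order $\log(1/\epsilon)$.

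Finally I would count parameters. There are $(N+1)^d$ cells; each local polynomial carries at most $d^n$ monomials (a crude bound on $\binom{d+n-1}{d}$), and assembling each monomial-times-bump term costs at most $4(d+n-1)$ parameters, yielding the stated bound $4d^n(N+1)^d(d+n-1)$. I expect the main obstacle to be bookkeeping rather than analysis: one must verify that the partition-of-unity bumps, the monomials, and all their products can be assembled into a single quadratic ReLU network using only the exact $\tilde{Q}$ unit, and that the per-term multiplication count is genuinely $O(d+n)$, so that the exponential-in-depth overhead incurred by Yarotsky's approximate product is entirely eliminated.
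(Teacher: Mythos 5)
Your proposal is correct and follows essentially the same route as the paper: Yarotsky's partition-of-unity plus local Taylor polynomial scheme, with every approximate ReLU multiplication replaced by the exact two-neuron quadratic product gadget of Lemma \ref{prop:xy}, and the same parameter count $4d^n(N+1)^d \cdot O(d+n)$ per the $(N+1)^d$ cells and $d^n$ monomials. The only cosmetic differences are your triangular hat versus the paper's trapezoidal $\psi$ and your slightly looser count of $d+n-1$ gadget invocations per term (a product of $d+n-1$ factors needs only $d+n-2$ binary multiplications, which is what the paper's proof actually establishes), but both yield the stated bound.
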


\textbf{The sketch of proof.}
First, given $f \in F_{n,d}$, we divide the input space into $(N+1)^d$ hyper-grids by a partition function $\phi_{\mathbf{m}}$ and use the sum-product combination of local Taylor polynomials $f_1$ with respect to each grid to approximate $f$. Then, we construct a quadratic ReLU network $\hat{f}$ to express $f_1$ without any error. Thus, $\hat{f}$ can also approximate $f$. Lastly, we count the number of parameters in the quadratic construction.

\begin{proof} 

\textbf{Step 1: Construct $f_1$ to approximate $f$.} First, we need the following function to partition the input space into hyper-grids: 
\begin{equation}
    \phi_{\mathbf{m}}(\mathbf{x})=\prod_{k=1}^{d}\psi \left ( 3N(x_k-\frac{m_k}{N} ) \right ),
\label{eq2:phimx}
\end{equation}
where
\begin{equation}
\nonumber
\psi(x)=\begin{cases}
 1, &   |x|<1 \\
 0, &  |x|>2 \\
 2-|x|, & 1 \leq  |x| \leq 2.
\end{cases}
\end{equation}
The characteristics of $\phi_{\mathbf{m}}(\mathbf{x})$ are that i) the maximum value of $\phi_{\mathbf{m}}(\mathbf{x})$ is $1$:
\begin{equation}
    \left \| \psi  \right \| _\infty =1 \ \operatorname{and}  \ \left \| \phi_\mathbf{m}  \right \| _\infty = 1 \ ,\forall \mathbf{m} ; 
\label{eq2:constraint}
\end{equation}
ii) a collection of $\phi_{\mathbf{m}}{(\mathbf{x})}$ on the domain $[0,1]^d$ forms a unity:
\begin{equation}
    \nonumber
    \sum_{\mathbf{m}}{\phi_{\mathbf{m}}(\mathbf{x})} = 1, \mathbf{x} \in [0,1]^d,
\end{equation}
where $\mathbf{m}=(m_1,\dots,m_d)\in\{0,1,\dots,N\}^d$; iii) $\phi_{\mathbf{m}}{(\mathbf{x})}$ is a local function:
\begin{equation}
    \operatorname{supp} \ \phi _{\mathbf{m} }\subset \left \{\mathbf{x}:|x_k - \frac{m_k}{N} |<\frac{1}{N} \ ,\forall k  \right \} .
\label{eq2:support}
\end{equation}
$\forall~\mathbf{m} \in \{0,\dots,N\}^d$, let $P_{\mathbf{m}}(\mathbf{x})$ be the $(n-1)$-degree Taylor polynomial of the function $f$ at $\mathbf{x}=\frac{\mathbf{m}}{N}$, we have
\begin{equation}
    P_{\mathbf{m} }(\mathbf{x})=\sum_{\mathbf{n}: |\mathbf{n}|<n}{\left.\frac{D^\mathbf{n}f}{\mathbf{n}!}\right|_{\mathbf{x} =\frac{\mathbf{m}}{N}}
\left ( \mathbf{x}-\frac{\mathbf{m} }{N}  \right )^\mathbf{n} }, 
\label{eq2:taylor}
\end{equation}
where $\mathbf{n}!=\textstyle \prod_{k=1}^{d}n_k!$ and $\left ( \mathbf{x}-\frac{\mathbf{m} }{N}  \right )^\mathbf{n}=\textstyle \prod_{k=1}^{d}\left ( {x}-\frac{{m} }{N}  \right )^{n}$. 

Define $f_1$ as
\begin{equation}
    f_1 = \sum_{\mathbf{m} \in \{0,\dots,N\}^d}{\phi_{\mathbf{m}}P_{\mathbf{m}}}.
\label{eq2:f1}
\end{equation}
Due to the cutoff effect of $\phi_{\mathbf{m}}$, $f_1$ is the sum of Taylor expansions of $f$ over $(N+1)^d$ hyper-grids. $f_1$ can approximate $f$ arbitrarily well as long as $N$ is sufficiently large. Mathematically, we have
\begin{equation}
    \begin{aligned}
&\left|f(\mathbf{x})-f_{1}(\mathbf{x})\right| \\
=&\left|\sum_{\mathbf{m}} \phi_{\mathbf{m}}(\mathbf{x})\left(f(\mathbf{x})-P_{\mathbf{m}}(\mathbf{x})\right)\right| \\
\overset{(1)}{\leq} & \sum_{\mathbf{m}:\left|x_{k}-\frac{m_{k}}{N}\right|<\frac{1}{N} \forall k}\left|f(\mathbf{x})-P_{\mathbf{m}}(\mathbf{x})\right| \\
 \overset{(2)}{\leq} & 2^{d} \max _{\mathbf{m}:\left|x_{k}-\frac{m_{k}}{N}\right|<\frac{1}{N} \forall k}\left|f(\mathbf{x})-P_{\mathbf{m}}(\mathbf{x})\right| \\
\overset{(3)}{\leq} & \frac{2^{d} d^{n}}{n !}\left(\frac{1}{N}\right)^{n} \max _{\mathbf{n}:|\mathbf{n}|=n} \operatorname{ess}~\underset{\mathbf{x} \in[0,1]^{d}}{\operatorname{sup}}\left|D^{\mathbf{n}} f(\mathbf{x})\right| \\
 \overset{(4)}{\leq} & \frac{2^{d} d^{n}}{n !}\left(\frac{1}{N}\right)^{n} .
\end{aligned}
\end{equation}
(1) follows from Eqs. (\ref{eq2:constraint}) and (\ref{eq2:support}); (2) follows from the fact that an arbitrary $\mathbf{x}$ belongs to the support of at most $2^d$ functions $\phi_{\mathbf{m}}$; (3) follows from a standard bound for the Taylor remainder; (4) follows from $\|f\|_{\mathcal{W}^{n,\infty}([0,1]^d)} \leq 1$, which is our definition. Hence, let
\begin{equation}
   N=\left \lceil \left ( \frac{n!}{2^dd^n} \frac{\epsilon }{2}  \right )^{-1/n} \right \rceil,
\end{equation}
where $\lceil \cdot \rceil$ is the ceiling function, we have
$
    \|f-f_1\|_\infty \leq \frac{\epsilon}{2}.
$

\begin{figure}[htbp]
    \centering
    \includegraphics[width=0.95\linewidth]{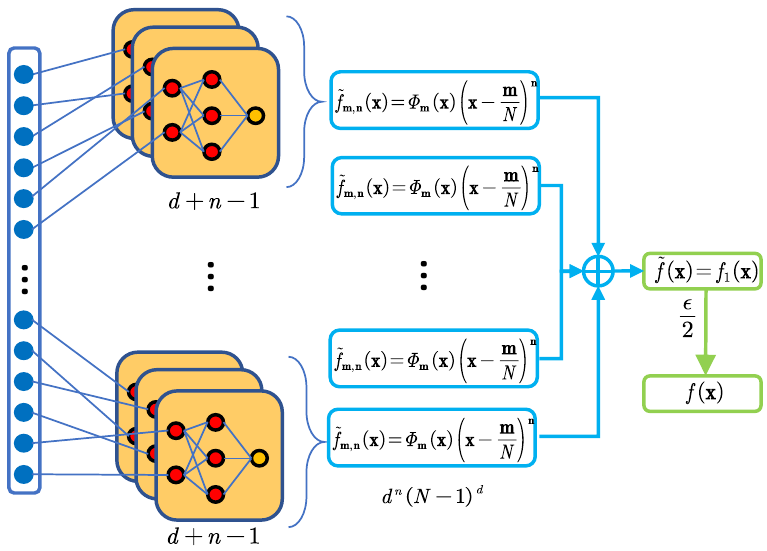}
    \caption{The architecture of using a quadratic network to approximate an arbitrary function $f$ in $F_{n, d}$.}
    \label{fig:proofsekcth}
\end{figure}

\textbf{Step 2: Construct a quadratic network $\tilde{f}$ to approximate $f_1$.} 
Let $a_{\mathbf{m},\mathbf{n}}=\left.\frac{D^\mathbf{n}f}{\mathbf{n}!}\right|_{\mathbf{x} =\frac{\mathbf{m}}{N}}\leq 1$, which is a universal constant, we can simplify the expression of $P_{\mathbf{m}}(\mathbf{x})$ as
\begin{equation}
    P_{\mathbf{m}}(\mathbf{x}) = \sum_{\mathbf{n}: |\mathbf{n}|<n}{a_{\mathbf{m},\mathbf{n}}
\left ( \mathbf{x}-\frac{\mathbf{m} }{N}  \right )^\mathbf{n} } .
\end{equation}
Correspondingly, $f_1$ is simplified as 
\begin{equation}
    f_1(\mathbf{x}) = \sum_{\mathbf{m} \in \{0,\dots,N\}^d} \sum_{\mathbf{n}: |\mathbf{n}|<n}{a_{\mathbf{m},\mathbf{n}} \phi_{\mathbf{m}}(\mathbf{x})
\left ( \mathbf{x}-\frac{\mathbf{m} }{N}  \right )^\mathbf{n} },
\label{eq2:expand}
\end{equation}
which is a linear combination of at most $d^n(N+1)^d$ sub-functions. Each sub-function is a product of no more than $d+n-1$ linear univariate factors: $\psi(3Nx_k-3m_k), k=1,\cdots,d$, and at most $n-1$ linear functions $x_k-\frac{m_k}{N}$. 

Now let us construct a quadratic neural network to $f_1$. As shown in Figure \ref{fig:proofsekcth}, we first 
approximate $\phi_{\mathbf{m}}(\mathbf{x})
\left ( \mathbf{x}-\frac{\mathbf{m} }{N}  \right )^\mathbf{n}$, a product of no more than $d+n-1$ terms. Suppose $\tilde{Q}$ is the multiplication according to Lemma \ref{prop:xy}, we can approximate $\phi_{\mathbf{m}}(\mathbf{x}) \left ( \mathbf{x}-\frac{\mathbf{m} }{N}  \right )^\mathbf{n}$ by iteratively applying $\tilde{Q}$:
\begin{equation}
\begin{aligned}
&\tilde{f}_{\mathbf{m}, \mathbf{n}}(\mathbf{x})= \tilde{Q}\bigg{(}\psi\left(3 N x_{1}-3 m_{1}\right),\\
& \tilde{Q}\left(\psi\left(3 N x_{2}-3 m_{2}\right), \ldots, \tilde{Q}\left(x_{k}-\frac{m_{k}}{N}, \ldots\right) \ldots\right)\bigg{)}.\\
\end{aligned}
\end{equation}
Thus, $\tilde{f}_{\mathbf{m}, \mathbf{n}}$ can be constructed by a quadratic network whose number of parameters is $18(d+n-2)$ and whose number of neurons is $2(d+n-2)$. Moreover, $\forall k$ and $\mathbf{x} \in [0,1]^d$, using $f_1$ to approximate $\tilde{f}$ is free of error according to Lemma \ref{prop:xy},
\begin{equation}
\scalebox{0.8}{$
\begin{aligned}
&\mid \tilde{f}_{\mathbf{m}, \mathbf{n}}(\mathbf{x})- \phi_{\mathbf{m}}(\mathbf{x})\left(\mathbf{x}-\frac{\mathbf{m}}{N}\right)^{\mathbf{n}} \mid \\
=& \mid \tilde{Q}\left(\psi\left(3 N x_{1}-3 m_{1}\right), \tilde{Q}\left(\psi\left(3 N x_{2}-3 m_{2}\right), \tilde{Q}\left(\psi\left(3 N x_{3}-3 m_{3}\right), \ldots\right)\right)\right) \\
&-\psi\left(3 N x_{1}-3 m_{1}\right) \psi\left(3 N x_{2}-3 m_{2}\right) \psi\left(3 N x_{3}-3 m_{3}\right) \ldots \mid \\
=&\mid\psi\left(3 N x_{1}-3 m_{1}\right) \tilde{Q}\left(\psi\left(3 N x_{2}-3 m_{2}\right), \tilde{Q}\left(\psi\left(3 N x_{3}-3 m_{3}\right), \ldots\right)\right) \\
&-\psi\left(3 N x_{1}-3 m_{1}\right) \psi\left(3 N x_{2}-3 m_{2}\right) \psi\left(3 N x_{3}-3 m_{3}\right) \ldots \mid\\
=&\mid\psi\left(3 N x_{1}-3 m_{1}\right) \psi\left(3 N x_{2}-3 m_{2}\right)\psi\left(3 N x_{3}-3 m_{3}\right), \ldots\\
&-\psi\left(3 N x_{1}-3 m_{1}\right) \psi\left(3 N x_{2}-3 m_{2}\right) \psi\left(3 N x_{3}-3 m_{3}\right) \ldots \mid \\
=& 0.
\end{aligned}$}
\end{equation}

Second, since $\tilde{f}_{\mathbf{m},\mathbf{n}}$ just approximates one term of $f_{1}(\mathbf{x})$, now we aggregate each $\tilde{f}_{\mathbf{m},\mathbf{n}}$ to get a final network:
\begin{equation}
    \tilde{f}=\sum_{\mathbf{m}\in\{0,\dots,N\}^d}\sum_{\n:|\n|<n}a_{\m,\n}\tf_{\m,\n}.
\end{equation}
Because $\tilde{f}_{\mathbf{m},\mathbf{n}}$ comprises of the local term $\phi_{\mathbf{m}}(\mathbf{x})$, $\tilde{f}_{\mathbf{m},\mathbf{n}}$ is also local. Thus, using $\tilde{f}$ to approximate $\tf$ is subjected to zero error:
\begin{equation}
\begin{aligned}
&\left|\tilde{f}(\mathbf{x})-f_{1}(\mathbf{x})\right|\\
\overset{(1)}=&\left|\sum_{\mathbf{m} \in\{0, \ldots, N\}^{d}} \sum_{\mathbf{n}:|\mathbf{n}|<n} a_{\mathbf{m}, \mathbf{n}}\left(\tilde{f}_{\mathbf{m}, \mathbf{n}}(\mathbf{x})-\phi_{\mathbf{m}}(\mathbf{x})\left(\mathbf{x}-\frac{\mathbf{m}}{N}\right)^{\mathbf{n}}\right)\right| \\
\overset{(2)}=&\left|\sum_{\mathbf{m}: \mathbf{x} \in \operatorname{Supp} \phi_{\mathbf{m}}} \sum_{\mathbf{n}:|\mathbf{n}|<n} a_{\mathbf{m}, \mathbf{n}}\left(\tilde{f}_{\mathbf{m}, \mathbf{n}}(\mathbf{x})-\phi_{\mathbf{m}}(\mathbf{x})\left(\mathbf{x}-\frac{\mathbf{m}}{N}\right)^{\mathbf{n}}\right)\right| \\
=&0.
\end{aligned}
\end{equation}
In the above equation, (1) follows from Eq. (\ref{eq2:expand}), and (2) is due to the locality of $\tilde{f}_{\mathbf{m},\mathbf{n}}$. 

From \textbf{Step 1}, we have $\|f-f_1\|_\infty \leq \frac{\epsilon}{2}$. Thus, if we choose $N=\left \lceil \left ( \frac{n!}{2^dd^n} \frac{\epsilon }{2}  \right )^{-1/n} \right \rceil$,
\begin{equation}
\|\tilde{f}-f\|_{\infty} \leq\left\|\tilde{f}-f_{1}\right\|_{\infty}+\left\|f_{1}-f\right\|_{\infty} \leq \frac{\epsilon}{2} + 0 \leq \epsilon.
\end{equation}
By a direct calculation, in total, $\tilde{f}$ consumes no more than $18d^n(N+1)^d(d+n-2)$ parameters.

\end{proof}

\textbf{Remark 1.} Table \ref{tab:size_Euclidean} compares the number of parameters consumed in the conventional and quadratic ReLU networks based on the approximation scheme in \cite{Yarotsky2017Error}. It can be seen that the quadratic construction is much more efficient than the conventional one. The saving is an order of magnitude. The key to accounting for such a considerable saving is that the quadratic construction can naturally express the multiplication operation. Thus, our analysis can be extended to a general polynomial network to justify their efficiency as well. 

\begin{table}[htbp]
\centering
  \caption{Comparison of the number of parameters used in the conventional and quadratic ReLU networks to approximate functions in Sobolov space based on the approximation scheme in \cite{Yarotsky2017Error}.}
\begin{tabular}{l|l|c}
\hline
Network &    Operation    & \#Parameters  \\ \hline
\multirow{2}{*}{Conventional \cite{Yarotsky2017Error} } & Product           & $\mathcal{O}(\ln({1/\epsilon}))$                 \\ \cline{2-3} 
                              & Total & $d^{n}(N+1)^{d} (d+1)n\mathcal{O}(\ln (1 / \epsilon))$        \\ \hline
\multirow{2}{*}{Quadratic (ours)} & Product           & 18        \\ \cline{2-3} 
                              & Total & $18d^n(N+1)^d(d+n-2)$         \\ \hline                              
\end{tabular}
\label{tab:size_Euclidean}
\end{table}

\subsection{Efficiency: Functions on a Manifold}
\label{subsec:manifold}

One opinion on why deep learning seems to not suffer the curse of dimensionality is that many high-dimensional data in the real world actually lie along low-dimensional latent manifolds in the ambient high-dimensional space \cite{tu2011manifolds}. Per this opinion, it is of great importance to consider functions over a manifold instead of Euclidean space. Accordingly, it is also necessary to confirm the efficiency of quadratic networks in approximating a function on a manifold. Here, we show that a quadratic network can also much more efficiently approximate functions on low-dimensional manifolds by leveraging that a quadratic neuron can express a product operation very naturally.
Before the main theorems, now let us introduce the concepts relevant to manifolds in detail.

\begin{definition}[Chart]
A chart $(U,\psi)$ of a manifold $\M$ is a pair, where $U \subset \M$ is an open set and $\psi: U \to \R^d$. What's more, both $\psi$ and $\psi^{-1}$ are continuous. 
\end{definition}

The open set $U$ is essentially a local neighborhood of $\M$, and $\psi$ defines a coordinate system on $U$. There might be two charts overlapping with each other. We say two charts $(U_1,\psi_1)$ and $(U_2,\psi_2)$ are $C^k$ compatible if and only if $\psi_1 \circ \psi_2^{-1}: \psi_2(U_1 \cap U_2) \to \psi_1 (U_1 \cap U_2) $ and $\psi_2 \circ \psi_1^{-1}: \psi_1(U_1 \cap U_2) \to \psi_2 (U_1 \cap U_2) $ are both $C^k$ continuous. One chart is insufficient to describe a manifold. Therefore, let us define an atlas.

\begin{definition}[$C^k$ atlas]
An $C^k$-atlas is a set $\{(U_i,\psi_i)\}$, wherein any two charts are $C^k$ compatible and $\cup_{i=1}^{C_{\M}}~ U_i = \M$.
\end{definition}

\begin{definition}[Smooth manifold]
A smooth manifold is a manifold with a $C^{\infty}$ atlas.
\end{definition}

\begin{definition}
A manifold is bounded if there exists a constant $B>0$ such that $|\x_i| \leq B$, $i=1,\cdots,d$, for any $\x \in \M$.
\end{definition}

\begin{prop}[$C^{\infty}$ partition of unity]
Let $\{U_i\}_{i=1}^{C_{\M}}$ cover a smooth manifold $\M$, there exists a partition $\{\rho_i\}_{i=1}^{C_{\M}}$ satisfying
\begin{equation}
    \sum_{i=1}^{C_{\M}} \rho_i =1,
\end{equation}
where the support of $\rho_i$ is within $U_i$.
\end{prop}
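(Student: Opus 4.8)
The plan is to establish the proposition along the classical route for smooth partitions of unity: build local smooth bump functions through the charts, guarantee that their sum is strictly positive everywhere, and then normalise. Throughout I would assume, as is implicit in the finiteness of $C_{\M}$ and consistent with the bounded-manifold setting introduced above (so that $\M$ is compact and, being metrizable, normal), that the finite family $\{U_i\}_{i=1}^{C_{\M}}$ covers $\M$. The first ingredient is the standard smooth bump on $\R^d$: starting from the one-variable function equal to $e^{-1/t}$ for $t>0$ and to $0$ for $t\le 0$, which is $C^{\infty}$, one assembles a radially symmetric $g:\R^d\to[0,1]$ that equals $1$ on a closed ball $\bar B_r$ and vanishes outside a larger ball $B_R$. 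This yields a smooth function that separates a compact ball from the complement of a slightly larger one.

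Next I would shrink the cover. By compactness of $\M$ together with normality, there is a refinement $\{V_i\}_{i=1}^{C_{\M}}$ with $\overline{V_i}\subset U_i$, each $\overline{V_i}$ compact, and $\bigcup_{i=1}^{C_{\M}} V_i=\M$ (the shrinking lemma). For each $i$, the chart $\psi_i$ carries $\overline{V_i}$ to a compact subset of $\psi_i(U_i)\subset\R^d$; after choosing the ball radii so that $\psi_i(\overline{V_i})$ sits inside $\bar B_r$ and the region near $\partial U_i$ lands outside $B_R$, the composition $g\circ\psi_i$ produces a smooth $\theta_i:\M\to[0,1]$ with $\theta_i\equiv 1$ on $\overline{V_i}$ and $\operatorname{supp}\theta_i\subset U_i$. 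Extension by zero across $\partial U_i$ keeps $\theta_i$ globally $C^{\infty}$ on $\M$ precisely because $\theta_i$ already vanishes on a neighbourhood of that boundary.

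Because $\{V_i\}$ covers $\M$, for every $\x\in\M$ at least one index gives $\theta_i(\x)=1$, so
\[
\Theta(\x):=\sum_{j=1}^{C_{\M}}\theta_j(\x)\ge 1>0
\]
defines a strictly positive, smooth function on all of $\M$. Setting
\[
\rho_i=\frac{\theta_i}{\Theta},\qquad i=1,\dots,C_{\M},
\]
each $\rho_i$ is $C^{\infty}$ as a quotient of smooth functions with nowhere-vanishing denominator, satisfies $\operatorname{supp}\rho_i\subset U_i$, and obeys $\sum_{i=1}^{C_{\M}}\rho_i=1$, which is exactly the asserted identity.

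The step I expect to be the main obstacle is the second one: carrying out the shrinking lemma correctly and verifying that the chartwise bumps patch into globally smooth functions with the prescribed supports—that is, that extension by zero across $\partial U_i$ preserves smoothness. This is where compactness and the $C^{\infty}$ compatibility of the atlas are genuinely used. The bump construction and the final normalisation are routine once this patching is secured.
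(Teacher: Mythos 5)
The paper offers no proof of this proposition: it is quoted as the standard existence theorem for smooth partitions of unity subordinate to a finite cover (the sort of fact found in the manifolds text the paper cites), so there is no in-paper argument to compare yours against. Your proof is the classical one and is essentially correct: exponential bump, shrinking lemma, chartwise transport, strict positivity of the sum on the shrunken cover, normalisation. Two points deserve care. First, smoothness of $\theta_i=g\circ\psi_i$ is a statement about the smooth structure: since the paper's charts are a priori only homeomorphisms, you must verify that $g\circ\psi_i\circ\psi_j^{-1}$ is $C^{\infty}$ on overlaps using the $C^{\infty}$ compatibility of the atlas — you flag this correctly, and the extension by zero is indeed harmless because $\theta_i$ vanishes on a neighbourhood of $\partial U_i$. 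Second, your single-ball bump tacitly assumes $\psi_i(\overline{V_i})$ can be sandwiched between two concentric balls inside $\psi_i(U_i)$; for an arbitrary chart this need not hold, and the standard repair is either to cover the compact set $\psi_i(\overline{V_i})$ by finitely many such ball pairs and combine the resulting bumps, or to invoke the smooth Urysohn lemma via mollification of an indicator of a slightly enlarged $\overline{V_i}$. In the atlas the paper actually constructs later ($U_i=\M\cap\B(\c_i,r)$ with affine $\psi_i$ mapping into $[0,1]^d$), the single-ball version does suffice, so your argument is adequate for the setting in which the proposition is used.
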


\begin{definition}[$C^n$ functions over a manifold]
Let $\M$ be a smooth manifold. If for any chart $(U,\psi)$ of $\M$, the composition $h\circ \psi^{-1}: \psi(U) \to \R$ is continuously differentiable up to the order $n$, then the function $h:\M \to \R$ is $C^n$.
\end{definition}

\begin{definition}[H\"{o}lder space $H^{n, \alpha}$]
A function $f: \mathcal{M} \mapsto \mathbb{R}$ in the Hölder space  $H^{n, \alpha}$ with a positive integer $n$  and  $\alpha \in(0,1]$ satisfies that  $f \in C^{n-1}$  and for any chart  $(U, \phi)$  and $ |\mathbf{n}|=n $, we have
\begin{equation}
\begin{aligned}
& \left|D^{\mathbf{n}}\left(f \circ \phi^{-1}\right)|_{\phi\left(\mathbf{x}_{1}\right)}-D^{\mathbf{n}}f \circ \phi^{-1}|_{\phi\left(\mathbf{x}_{2}\right)} \right| \\
& \leq \left\|\phi\left(\mathbf{x}_{1}\right)-\phi\left(\mathbf{x}_{2}\right)\right\|_{2}^{\alpha}, \quad \forall \mathbf{x}_{1}, \mathbf{x}_{2} \in U
\end{aligned}
\end{equation}
\end{definition}

\begin{definition}[Reach of a manifold, {Definition 2.1 of \cite{aamari2019estimating}}]
Denote the medial axis of a manifold $\mathcal{M}$ as $\mathrm{Med}(\mathcal{M})$ which is the subset of $\mathcal{M}$, and the points in $\mathrm{Med}(\mathcal{M})$ have at least two neighbors on $\mathcal{M}$. Denote a distance function as $\mathcal{M}$: $d(\z,\mathcal{M})=\inf_{\p\in \mathcal{M}}\|\p-\z\|$.
Then, $\mathrm{Med}(\mathcal{M})=\left\{\z \in \mathbb{R}^D \mid \exists~ \p \neq \q \in \mathcal{M},\|\p-\z\|=\|\q-\z\|=d(\z, \mathcal{M})\right\}$. The reach $\tau > 0$ of $\mathcal{M}$ is defined as
\begin{equation}
    \tau=\inf_{\p \in \mathcal{M}}d(\p, \mathrm{Med}(\mathcal{M}))=\inf_{\z \in \mathrm{Med}(\mathcal{M})}d(\z,\mathcal{M})
\end{equation}

\end{definition}

Geometrically, one can interpret the reach $\tau$ of a point $\x \in \mathcal{M}$ as the radius of the osculating circle that is at least $\tau$. A large reach means that the manifold does not change rapidly. In our construction, the reach controls the selection of an atlas for $\mathcal{M}$.

\begin{prop}[\textbf{Lemma 2 of \cite{chen2019efficient}}]
For $i=1,\cdots,C_{\M}$, the function $h_i$ belongs to $H^{n,\beta}$ and 
\begin{equation}
\begin{aligned}
        & |D^{\n}(h_i\circ \psi_i^{-1})|_{\psi_i(\x_1)} - D^{\n}(h_i\circ \psi_i^{-1})|_{\psi_i(\x_2)}|  \\
    &\leq L_i ||\psi_i(\x_1)-\psi_i(\x_2)||_2^\beta, ~\forall~ \x_1,\x_2 \in U_i, \\
\end{aligned}
\end{equation}
where $L_i= 2^{n+1} 2\sqrt{d}\mu_i \lambda_i (2r)^{1-\beta}$ is a H\"{o}lder coefficient depending on universal constants $d, \mu_i,\lambda_i, r$. 
\label{prop:taylorremainder}
\end{prop}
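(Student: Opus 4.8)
The plan is to exploit the product structure that defines each local piece. Writing $h_i = f\cdot\rho_i$ for the localized piece in the partition-of-unity decomposition and pulling it back through the chart yields
\begin{equation}
h_i\circ\psi_i^{-1} = (f\circ\psi_i^{-1})\cdot(\rho_i\circ\psi_i^{-1})
\end{equation}
on the coordinate domain $\psi_i(U_i)\subset\R^d$, which by the reach-controlled construction of the atlas has diameter at most $2r$. The first factor inherits its regularity from $f\in H^{n,\alpha}$ together with the $C^\infty$-compatibility of the charts, while the second is $C^\infty$ since $\rho_i$ is a $C^\infty$ partition of unity and $\psi_i^{-1}$ is smooth. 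The whole argument then reduces to combining these two factors via the Leibniz rule and tracking constants.

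First I would bound the two factors separately up to order $n$. For $\rho_i\circ\psi_i^{-1}$ all derivatives up to order $n$ are uniformly bounded on $\psi_i(U_i)$, and I absorb such a bound into $\mu_i$. For $f\circ\psi_i^{-1}$ the H\"{o}lder hypothesis on $f$ gives that its $n$-th order derivatives are $\beta$-H\"{o}lder (with $\beta\le\alpha$) and the lower-order ones are bounded, the corresponding constant being absorbed into $\lambda_i$. Passing derivatives through the inverse chart $\psi_i^{-1}$ uses the higher-order (Fa\`a di Bruno) chain rule, and it is here that the dimensional factor $2\sqrt{d}$ arises from the Jacobian and norm-equivalence estimates.

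Next I would apply the Leibniz rule to the product. For a multi-index $\n$ with $|\n|=n$,
\begin{equation}
D^{\n}(h_i\circ\psi_i^{-1}) = \sum_{\mathbf{k}\le\n}\binom{\n}{\mathbf{k}}\,D^{\mathbf{k}}(f\circ\psi_i^{-1})\,D^{\n-\mathbf{k}}(\rho_i\circ\psi_i^{-1}).
\end{equation}
Differencing this expression between $\psi_i(\x_1)$ and $\psi_i(\x_2)$ and using the telescoping identity $ab-a'b'=(a-a')b+a'(b-b')$ in each summand separates a term in which a top-order H\"{o}lder-continuous factor is differenced from a term in which a bounded smooth factor is differenced. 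The former contributes $\|\psi_i(\x_1)-\psi_i(\x_2)\|_2^{\beta}$ directly; the latter is Lipschitz, hence controlled by $\|\psi_i(\x_1)-\psi_i(\x_2)\|_2$, which on a domain of diameter $2r$ is at most $(2r)^{1-\beta}\|\psi_i(\x_1)-\psi_i(\x_2)\|_2^{\beta}$. This conversion is precisely where the factor $(2r)^{1-\beta}$ enters. Collecting the binomial sum bounded by $2^{n+1}$, the dimensional factor $2\sqrt{d}$, and the factor bounds $\mu_i,\lambda_i$ then produces $L_i = 2^{n+1}\,2\sqrt{d}\,\mu_i\lambda_i\,(2r)^{1-\beta}$.

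I expect the main obstacle to be the uniform control of the higher-order derivatives of the inverse chart map $\psi_i^{-1}$. The Fa\`a di Bruno expansion produces sums over partitions whose bounds depend on \emph{all} derivatives of $\psi_i^{-1}$ up to order $n$, and making these uniform across each chart — so that a single constant $L_i$ suffices — is exactly what forces the atlas to be chosen according to the reach $\tau$: a sufficiently large reach guarantees charts of radius $r$ on which $\psi_i^{-1}$ and its derivatives stay uniformly bounded. The other delicate point is the clean conversion between the exponent-$1$ (Lipschitz) estimates on the smooth factor and the target exponent $\beta$ while keeping every constant explicit.
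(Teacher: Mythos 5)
The paper does not actually prove this proposition: it is imported as Lemma~2 of \cite{chen2019efficient}, so there is no in-paper proof to compare against. Your reconstruction follows the standard argument behind that cited lemma and is essentially sound. The decomposition $h_i\circ\psi_i^{-1}=(h\circ\psi_i^{-1})\cdot(\rho_i\circ\psi_i^{-1})$, the Leibniz expansion, the telescoping identity $ab-a'b'=(a-a')b+a'(b-b')$, and the interpolation $\|u-v\|_2\le(2r)^{1-\beta}\|u-v\|_2^{\beta}$ on a coordinate patch of diameter at most $2r$ correctly account for the factors $2^{n+1}$ (the multi-index binomial sum $\sum_{\mathbf{k}\le\n}\binom{\n}{\mathbf{k}}=2^{n}$ times the factor $2$ from telescoping) and $(2r)^{1-\beta}$, with $\mu_i$ and $\lambda_i$ absorbing the uniform derivative bounds of the two factors. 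Two refinements are worth noting. First, the Fa\`a di Bruno machinery you flag as the main obstacle is largely unnecessary here: the paper's definitions of $C^n$ functions and of $H^{n,\beta}$ on a manifold are stated \emph{directly in chart coordinates}, so the $\beta$-H\"older continuity of $D^{\n}(h\circ\psi_i^{-1})$ at top order, the boundedness of its lower-order derivatives, and the smoothness of $\rho_i\circ\psi_i^{-1}$ are all hypotheses rather than consequences to be extracted by differentiating through $\psi_i^{-1}$. Second, in this paper $\psi_i$ is an affine (projection-type) map, so the only role of the reach condition $r<\tau/2$ in this lemma is to guarantee that $\psi_i$ is injective on $U_i=\M\cap\B(\c_i,r)$ so that $\psi_i^{-1}$ and the chart-dependent constants $\mu_i,\lambda_i$ are well defined; the quantitative control you attribute to the reach enters the surrounding theorem (e.g., Proposition~\ref{prop:4} and the choice of atlas), not the H\"older estimate itself.
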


\begin{prop}[\textbf{Lemma 3 of \cite{chen2019efficient}}]
For any $i\in \{1,2,\cdots,C_{\M}\}$ and $\mathcal{K}_i = \{\x \in \M: r^2-\Delta \leq ||\x-\c_i||_2^2 \leq r^2\}$, there exists a constant $c$ whose value depends on $h_i$ and $\psi_i$ such that
\begin{equation}
    \underset{\x \in \mathcal{K}_i}{\max}~ |h_i(\x)| \leq \frac{c(\pi+1)}{r(1-r/\tau)} \Delta.
\end{equation}
\label{prop:4}
\end{prop}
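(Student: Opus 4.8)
The plan is to exploit that $h_i$ is the localized component $h_i=f\rho_i$ of the target function, where $\rho_i$ is the $C^{\infty}$ partition-of-unity function introduced above, whose support lies inside the ball $B(\c_i,r)$. Consequently $h_i$ is continuous and vanishes on the outer sphere $S_i=\{\x\in\M:\|\x-\c_i\|_2=r\}$, and the shell $\mathcal{K}_i$ is exactly the outer annular layer of the support of $h_i$, of small ambient radial thickness controlled by $\Delta$. So for each $\x\in\mathcal{K}_i$ I would pick a point $\x^*\in S_i$ with $h_i(\x^*)=0$, write $|h_i(\x)|=|h_i(\x)-h_i(\x^*)|$, and then control this difference by two separate ingredients: (a) a Lipschitz bound on $h_i$, and (b) a geometric bound on the on-manifold distance $d_\M(\x,\x^*)$ in terms of $\Delta$, $r$ and the reach $\tau$.

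First I would fix the Lipschitz constant. Since $h_i\in H^{n,\beta}$, the composition $h_i\circ\psi_i^{-1}$ is $C^{n-1}$ with bounded gradient on the compact set $\psi_i(U_i)$; combined with the Lipschitz continuity of the chart map $\psi_i$ (part of the chart definition), this produces a constant $c$ depending only on $h_i$ and $\psi_i$ — in the same spirit as Proposition \ref{prop:taylorremainder} — such that $|h_i(\x)-h_i(\x^*)|\le c\,d_\M(\x,\x^*)$ for all $\x,\x^*\in U_i$. This step is precisely what explains the dependence of the final constant $c$ on $h_i$ and $\psi_i$.

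Next comes the geometric core: bounding $d_\M(\x,\x^*)$. The ambient radial thickness of $\mathcal{K}_i$ is $r-\sqrt{r^2-\Delta}=\frac{\Delta}{r+\sqrt{r^2-\Delta}}\le\frac{\Delta}{r}$, so what remains is to convert this ambient radial displacement into an on-manifold geodesic length. Here the reach enters through two standard estimates. On one hand, the normal-component bound $|\langle\x-\c_i,\nu\rangle|\le\|\x-\c_i\|_2^2/(2\tau)$ for unit normals $\nu$ forces the tangential projection of the unit radial direction to be at least $1-r/\tau$ inside $B(\c_i,r)$, so moving along $\M$ in the direction of steepest radial ascent raises $\rho(\cdot)=\|\cdot-\c_i\|_2$ at a rate no smaller than $(1-r/\tau)$ per unit arc length; this prevents the manifold from curling off tangentially and yields the factor $1/(1-r/\tau)$. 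On the other hand, the geodesic-versus-chord comparison on a reach-$\tau$ manifold (arc length at most $\tfrac{\pi}{2}$ times the chord for chords below $2\tau$) inflates the straight-line displacement by a factor of order $\pi$. Summing the radial-progress and arc-inflation contributions gives $d_\M(\x,\x^*)\le\frac{\pi+1}{1-r/\tau}\cdot\frac{\Delta}{r}$.

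Combining the two ingredients yields $\max_{\x\in\mathcal{K}_i}|h_i(\x)|\le c\,d_\M(\x,\x^*)\le\frac{c(\pi+1)}{r(1-r/\tau)}\Delta$, which is the claim. I expect step (b) to be the main obstacle: extracting the exact constant $(\pi+1)/(1-r/\tau)$ from the reach geometry, while guaranteeing that from any $\x\in\mathcal{K}_i$ one can actually travel outward to $S_i$ without leaving the chart $U_i$, requires the reach estimates above to be assembled carefully so that the two curvature effects combine additively into $\pi+1$ rather than multiplicatively. By contrast, the Lipschitz estimate and the elementary radial-thickness computation are routine.
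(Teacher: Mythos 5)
The paper does not actually prove this proposition: it is imported verbatim as Lemma~3 of \cite{chen2019efficient}, so there is no in-paper argument to compare against. Your sketch reproduces the strategy of the cited source's proof: since $h_i=h\cdot\rho_i$ with $\operatorname{supp}\rho_i\subset U_i=\M\cap\B(\c_i,r)$, the function $h_i$ vanishes on the relative boundary of $U_i$; a Lipschitz bound (inherited from $h\in H^{n,\beta}$ and the chart $\psi_i$, which is where the constant $c$ comes from) reduces the claim to bounding the on-manifold distance from a point of $\mathcal{K}_i$ to that boundary; and the Euclidean shell thickness $r-\sqrt{r^2-\Delta}\le\Delta/r$ together with the reach-based lower bound $1-r/\tau$ on the tangential component of the unit radial direction converts this into an arc-length bound of order $\Delta/(r(1-r/\tau))$. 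That is the right architecture, and it correctly accounts for every factor in the stated bound except the prefactor $\pi+1$.

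The one soft spot is exactly the one you flag: the claim that the two curvature effects ``combine additively into $\pi+1$'' is asserted, not derived, and your own radial-ascent argument (integrating the rate of increase of $\|\cdot-\c_i\|_2$ along the gradient flow restricted to $\M$) already yields $d_{\M}(\x,\partial U_i)\le \Delta/(r(1-r/\tau))$ with no $\pi$ at all, which makes the proposed additive mechanism look superfluous rather than necessary. In \cite{chen2019efficient} the factor $\pi$ enters through a geodesic-versus-chord comparison tied to how the Lipschitz estimate is phrased in chart coordinates, not as an additive correction to the radial-progress bound. Since the statement only asserts existence of a constant $c$ depending on $h_i$ and $\psi_i$, and your argument produces the correct $\Delta/(r(1-r/\tau))$ scaling, this discrepancy is cosmetic rather than structural; but as written, the step that manufactures the specific constant $\pi+1$ would not survive scrutiny and should either be replaced by the chord-comparison argument of the original reference or dropped in favor of absorbing the factor into $c$.
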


\begin{theorem}
Given a bounded manifold $\M$ with the positive reach and a function $h: \mathcal{M} \mapsto \mathbb{R} \in H^{n,\beta}$, for any $\epsilon > 0$, there exists a network $\hat{h}$ such that 
\begin{equation}
    \underset{\x \in \M}{\sup} |h(\x)-\hat{h}(\x)|<\epsilon.
\end{equation}
\ff{$\hat{h}$ has at most $C_{\M}(c_1d^n(N+1)^d+1)$ layers and $C_{\M}(c_2d^n(N+1)^d(d+n-1)+c_3c_{\epsilon}d^{3.75})$ parameters, where $C_{\M}$ is the number of partitions on a manifold, $c_1$, $c_2$, and $c_3$ are universal constants, and $N, c_{\epsilon}$ depend on the total error $\epsilon$. }
\label{manifold_approximation}
\end{theorem}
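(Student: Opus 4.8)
The plan is to adapt the chart-based construction of Chen \textit{et al.} \cite{chen2019efficient}, replacing every ReLU multiplication gadget (which costs $\mathcal{O}(\log(1/\epsilon))$ parameters) by the exact quadratic product of Lemma \ref{prop:xy} (which costs only $4$ parameters). First I would cover $\M$ by an atlas $\{(U_i,\psi_i)\}_{i=1}^{C_\M}$ whose cardinality is controlled by the positive reach $\tau$ and the bound $B$: since the reach is bounded below, $\M$ is locally a graph over its tangent space, so each $\psi_i$ may be taken to be the orthogonal projection onto a $d$-dimensional tangent plane, a linear map. The number of charts $C_\M$ is then a geometric constant independent of $\epsilon$. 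Using the $C^\infty$ partition of unity, I would decompose $h=\sum_{i=1}^{C_\M}\rho_i h =: \sum_i h_i$, where each $h_i$ is supported in $U_i$, reducing the global problem to approximating each $h_i$ separately and summing.

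Next I would push each piece to its local coordinate chart and approximate $h_i\circ\psi_i^{-1}:\psi_i(U_i)\subset\R^d\to\R$ by the Sobolev/Taylor machinery of Theorem \ref{real_space}. By Proposition \ref{prop:taylorremainder}, $h_i\circ\psi_i^{-1}$ lies in $H^{n,\beta}$ with an explicit Hölder coefficient $L_i$, so the local-Taylor-plus-partition construction used in the proof of Theorem \ref{real_space} applies verbatim in the coordinate space, yielding a piecewise polynomial whose $L^\infty$ error on a grid of resolution $N$ decays like $N^{-(n+\beta)}$. Since $\psi_i$ is linear it is realized exactly by a single degenerate quadratic layer, so the composite $\hat{h}_i=(\widehat{h_i\circ\psi_i^{-1}})\circ\psi_i$ approximates $h_i$ on $U_i$. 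The efficiency gain enters precisely here: each monomial in the local Taylor polynomials is a product of at most $d+n-1$ linear factors, and by iterating $\tilde{Q}$ from Lemma \ref{prop:xy} every such product is represented \emph{exactly} with $4(d+n-2)$ parameters, versus the $\mathcal{O}(\log(1/\epsilon))$ per product required by conventional ReLU networks.

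The final network is $\hat{h}=\sum_{i=1}^{C_\M}\hat{h}_i$, and its error splits into three pieces: the interior Taylor-approximation error on each chart, the product-representation error (zero, by Lemma \ref{prop:xy}), and the boundary error where the partition functions $\rho_i$ transition and where $h_i\circ\psi_i^{-1}$ is extended past the support of $\rho_i$. The first term is bounded by $N^{-(n+\beta)}$ through the Hölder estimate of Proposition \ref{prop:taylorremainder}; the boundary term is controlled by Proposition \ref{prop:4}, which bounds $|h_i|$ on the annular collar $\mathcal{K}_i$ by $c(\pi+1)\Delta/(r(1-r/\tau))$, so choosing the collar width $\Delta$ to match the grid scale drives this contribution below $\epsilon/2$ as well. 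Balancing $N^{-(n+\beta)}\sim\epsilon$ gives $N\sim\epsilon^{-1/(n+\beta)}$; since the per-chart parameter count of the quadratic construction scales like $N^d$ (a constant-size group of quadratic neurons per grid cell), each chart contributes $\mathcal{O}(N^d)=\mathcal{O}(\epsilon^{-d/(n+\beta)})$ parameters, and summing over the $C_\M$ charts yields the claimed bound $C_\M(c_1'\epsilon^{-d/(n+\beta)})$.

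I expect the main obstacle to be the boundary bookkeeping in the last step: making the collar estimate of Proposition \ref{prop:4} interact correctly with the partition of unity and the per-chart Taylor approximation, so that the accumulated boundary error over all $C_\M$ overlapping charts remains $\mathcal{O}(\epsilon)$ without inflating the parameter count. One must also verify that the linear chart maps $\psi_i$ can be composed with the quadratic Taylor network while preserving the exactness of Lemma \ref{prop:xy}, and that the Hölder coefficients $L_i$ are uniformly bounded (via the reach $\tau$) so that a single resolution $N$ suffices simultaneously for every chart.
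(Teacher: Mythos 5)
Your proposal follows essentially the same route as the paper: an atlas of $C_\M$ charts with linear chart maps, a partition of unity, local Taylor approximation in coordinates with the Hölder bound of Proposition \ref{prop:taylorremainder}, exact products via Lemma \ref{prop:xy}, boundary control via the collar estimate of Proposition \ref{prop:4}, and the same parameter count. The only detail you leave implicit is that the paper realizes the truncation explicitly inside the network, writing $\hat{h}=\sum_i \hat{h}_i\times(\mathds{1}_{\Delta}\circ s_i)$ with the radial function $s_i(\x)=\|\x-\c_i\|_2^2$ implemented by a quadratic subnetwork, which is exactly the boundary bookkeeping you flag as the remaining obstacle.
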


\textbf{The sketch of proof:} Our strategy is divide-and-conquer. First, we construct an atlas $\{(U_i,\psi_i)\}_{i=1}^{C_{\M}}$ for the manifold of interest. Then, based on such an atlas, a function $h$ in H\"{o}lder space is partitioned and mapped to Euclidean space using $\psi_i^{-1}$. After that, we use a Taylor polynomial $\bar{h}_i$ to approximate it with an arbitrarily small error, and a quadratic network $\hat{f}_i$ can approximate $\bar{h}_i$ with no error. Next, the function $\bar{h}_i$ needs to be back-projected into the original manifold. Hence, we define $\hat{h}_i = \bar{h}_i \circ \psi_i$ and a bounding function $\mathds{1}_{\Delta}$. Then, the final approximator $\hat{h}$ is a sum of products between $\hat{h}_i$ and $\mathds{1}_{\Delta}$. At last, the total error is counted. Figure \ref{fig:manifold} illustrates our strategy and associated intermediate functions.

\begin{proof} 
\textbf{Step 1: Construct an atlas.} Let the open Euclidean ball in $\R^D$ be $\mathcal{B}(\c,r)$, where $\c$ is the ball center and $r$ is the radius. Because $\mathcal{M}$ is compact, we can find a finite set of points $\{\c_i\}_{i=1}^{C_{\M}}$ and set $r<\tau/2$, $\tau$ is the reach of $\mathcal{M}$, to ensure $\M \subset \cup_i \B(\c_i,r)$. We select $\{(U_i,\psi_i)\}_{i=1}^{C_{\M}}$ as an atlas, where $U_i = \M \cap \B(\c_i,r)$. $C_{\M}$ is upper bounded by
\begin{equation}
C_{\mathcal{M}} \leq\left\lceil\frac{S A(\mathcal{M})}{r^{d}}T_{d}\right\rceil,
\end{equation}
where $SA(\M)$ is the surface area of $\M$ and $T_d$ is the maximum thickness of $U_i, i=1,\cdots, C_{\mathcal{M}}$.

At each chart, we define $\psi_i$ as a projection with translating, scaling, and rotation that maps the local neighborhood of a manifold into the Euclidean space $[0,1]^d$. Mathematically, $\psi_i(\x)$ is formulated as
\begin{equation}
    \psi_i(\x) = b_i(\V_i(\x-\c_i)+\textbf{s}_i),
\end{equation}
where $\psi_i(\x) \in [0,1]^d$, $b_i \in (0,1]$ is a scaling factor, $\V_i$ is a rotation matrix, and $\textbf{s}_i$ is a translation vector. $\psi_i(\x)$ can be directly realized by a one-hidden-layer quadratic network.


\textbf{Step 2: Partition and estimation.} Here, we partition $h$ into $\{h_i\}_{i=1}^{C_{\M}}$ such that 
\begin{equation}
    h(\x) = \sum_{i=1}^{C_{\M}} h_i(\x) \cdot \rho_i (\x) = \sum_{i=1}^{C_{\M}} h_i(\x).
\end{equation}
A quadratic network can approximate $h_i\circ \psi_i^{-1}(\x)$ arbitrarily well. This is because $h_i$ is from the H\"{o}lder class, and we can apply the similar Taylor expansion idea in the last subsection.

Here, let $\bar{h}_i(\x)$ be a Taylor polynomial satisfying the conditions consistent with those in the last subsection. We have
\begin{equation}
    \bar{h}_i(\x) = \sum_{\m \in \{1,...,N\}^d}\phi_{\m}{(\x)}P_{\m}{(\x)},
\end{equation}
where, $\phi_{\m}$ is defined in Eq. (\ref{eq2:phimx}) and $P_{\m}{(\x)}$ is an $n$-degree Taylor polynomial of the function $h_i\circ \psi_i^{-1}(\x)$ at $\frac{\m}{N}$ as Eq. (\ref{eq2:taylor}) shows. Thus, the approximation error of $h_i\circ \psi_i^{-1}(\x)$ and $\bar{h}_i(\x)$ is 

\begin{equation}
\begin{aligned}
    & \ \ \ \ \underset{\x \in [0,1] }{\sup} |h_i\circ \psi_i^{-1}(\x)-\bar{h}_i(\x)| \\
    & =  \underset{\x \in [0,1] }{\sup} |\sum_{\m} \phi_{\m} h_i\circ \psi_i^{-1}(\x) - \sum_{\m} \phi_{\m}P_{\m}(\x)|      \\
    & \leq \underset{\x \in [0,1] }{\sup} \sum_{\m} \phi_{\m}|(h_i\circ \psi_i^{-1}(\x) - P_{\m}(\x))| \\
    & \leq \underset{\x \in [0,1] }{\sup} \sum_{{\mathbf{m}:\left|x_{k}-\frac{m_{k}}{N}\right|<\frac{1}{N} \forall k}} |(h_i\circ \psi_i^{-1}(\x) - P_{\m}(\x))| \\
    & \leq \underset{\x \in [0,1] }{\sup} 2^d \underset{{\mathbf{m}:\left|x_{k}-\frac{m_{k}}{N}\right|<\frac{1}{N} \forall k}}{\max} |h_i\circ \psi_i^{-1}(\x) - P_{\m}(\x)| \\
    & \overset{(1)}{\leq}  \frac{2^d d^n}{n!} \Big(\frac{1}{N}\Big)^n \underset{\n=n}{\max} \Big|D^{\n}(h_i\circ \psi_i^{-1})|_{\frac{\m}{N}} - D^{\n}(h_i\circ \psi_i^{-1})|_{\bm{y}} \Big| \\
    & \overset{(2)}{\leq}  \frac{2^d d^n}{n!} \Big(\frac{1}{N}\Big)^n 2^{n+1}\sqrt{d}\mu_i \lambda_i (2r)^{1-\beta} ||\frac{\m}{N}-\x||_{2}^{\beta} \\
    & \leq \frac{2^d d^n}{n!} \Big(\frac{1}{N}\Big)^n 2^{n+1}\sqrt{d}\mu_i \lambda_i (2r)^{1-\beta} d^{2/\beta} \Big(\frac{1}{N}\Big)^\beta,
\end{aligned}
\label{eq2:approximate}
\end{equation}
where (1) follows from the bound of Taylor remainder and $\bm{y}$ is the linear interpolation of $\frac{\m}{N}$ and $\x$; (2) follows from Proposition \ref{prop:taylorremainder}. Similarly, setting 
\begin{equation}
N \geq \Big(\frac{2^d d^n}{\delta n!} 2^{n+1}\sqrt{d}\mu_i \lambda_i (2r)^{1-\beta} d^{2/\beta} \Big)^{\frac{1}{n+\beta}},
\label{eq2:N}
\end{equation}
leads to
\begin{equation}
\frac{2^d d^n}{n!} \Big(\frac{1}{N}\Big)^n 2^{n+1}\sqrt{d}\mu_i \lambda_i (2r)^{1-\beta} d^{2/\beta} \Big(\frac{1}{N}\Big)^\beta \leq \delta.
\label{eq2:delta}
\end{equation}

Then, since $\bar{h}_i(x)$ is constructed by a Tayor polynomial in Euclidean space, a quadratic network $\hat{f}_i$ can approximate it with no error, as we proved in the last subsection. Therefore, the approximation error between $h_i$ and $\hat{f}_i$ is also no more than $\delta$. \ff{If we choose $N$ in Eq. (\ref{eq2:N}), a quadratic network needs at most $d^n(N+1)^d$ layers with $18d^n(N+1)^d(d+n-1)$ parameters to approximate $\bar{h}_i(x)$. }

\textbf{Step 3: Back-projection and bounding.} The function $\bar{h}_i$ is an approximator to $h_i\circ \psi_i^{-1}$ which is supported on $[0,1]^d$. We need to back-project $\bar{h}_i$ into the manifold in order to approximate our target function $h_i$ which is supported over the $i$-th chart of a manifold. To do so, we compose $\bar{h}_i$ with $\psi_i$ to obtain $\hat{h}_i$: $\hat{h}_i=\bar{h}_i \circ \psi_i$. $\hat{h}_i$ can only approximate $h_i$ well within the domain of $U_i$. Therefore, we need to bound $\hat{h}_i$ within $U_i$. 

Mathematically, we define the truncation function as a composition of a half trapezoid function $\mathds{1}_{\Delta}(s)$ and a radial function $s_i(\x)$. This is because each $U_i$ is included by $\mathcal{B}(\c,r)$. For $i=1,2,\cdots, C_\M$, the radial function is
\begin{equation}
    s_i(\x) = ||\x-c_i||_2^2,
\end{equation}
while the half trapezoid function $\mathds{1}_{\Delta}(s)$ is formulated as follows:
\begin{equation}
    \mathds{1}_{\Delta}(s) = 
        \begin{cases}
        1,   \ \ \ \    \ \ \ \       \ \ \ \   \ \ \ \ \ \ \ \ \ \   s < r^2 - \Delta \\
        -\frac{1}{\Delta} (s-r^2), \ \ \ \ \ r^2 - \Delta \leq s \leq r^2\\
        0,    \ \ \ \       \ \ \ \    \ \ \ \   \ \ \ \ \ \ \ \ \ \   s > r^2 ,      \\
        \end{cases}
\end{equation}
where $\Delta$ is chosen based on the total error $\epsilon$. 

\textbf{Step 4: Total error.} To accomplish a truncation, we directly multiply $\hat{h}_i$ with $\mathds{1}_{\Delta}\circ s_i$. The final approximator is $\hat{h}=\sum_{i=1}^{C_\M}\hat{h}_i \times (\mathds{1}_{\Delta}\circ s_i)$. Now we are ready to estimate the total error 

\begin{equation}
    \begin{aligned}
     &   \underset{\x \in \M}{\sup} |h(\x)-\hat{h}(\x)| \\
     = &  \underset{\x \in \M}{\sup} |\sum_{i=1}^{C_{\M}} (h_i(\x) \times \mathds{1}(\x \in U_i) -\hat{h}_i(\x) \times (\mathds{1}_{\Delta}\circ s_i))| \\
     \leq & \underset{\x \in \M}{\sup} \sum_{i=1}^{C_{\M}} | h_i(\x) \times \mathds{1}(\x \in U_i) -\hat{h}_i(\x) \times (\mathds{1}_{\Delta}\circ s_i)| \\
     \leq & \underset{\x \in \M}{\sup} \sum_{i=1}^{C_{\M}} \underbrace{| h_i(\x) \times \mathds{1}(\x \in U_i) - h_i(\x) \times (\mathds{1}_{\Delta}\circ s_i)|}_{A_{i,1}}  \\
         + & \underset{\x \in \M}{\sup} \sum_{i=1}^{C_{\M}} \underbrace{| h_i(\x) \times (\mathds{1}_{\Delta}\circ s_i) -\hat{h}_i(\x) \times (\mathds{1}_{\Delta}\circ s_i)|}_{A_{i,2}} \\
     = & C_{\M}\Big(\frac{c(\pi+1)}{r(1-r/\tau)}\Delta + \delta \Big),     
\end{aligned}
\end{equation}
where $A_{i,1}$ satisfies Proposition \ref{prop:4} and $A_{i,2}$ follows from Eq. (\ref{eq2:approximate})-(\ref{eq2:delta}). Then 
\begin{equation}
|h_i(\x)-\hat{h}_i(\x)|=|h_i\circ \psi^{-1}\circ \psi(\x)-\bar{h}_i\circ \psi(\x)| \leq \delta.
\end{equation}
Finally, we pick $\delta = \frac{\epsilon}{2C_{\M}}$ so that $\sum_{i=1}^{C_{\M}}A_{i,2} \leq \frac{\epsilon}{2}$. Similarly, we choose $\Delta = \frac{r(1-r/\tau)\epsilon}{2c(\pi+1)C_{\M}}$ to make $C_{\M}\frac{c(\pi+1)}{r(1-r/\tau)}\Delta \leq \frac{\epsilon}{2}$.

\begin{figure}
    \centering
    \includegraphics{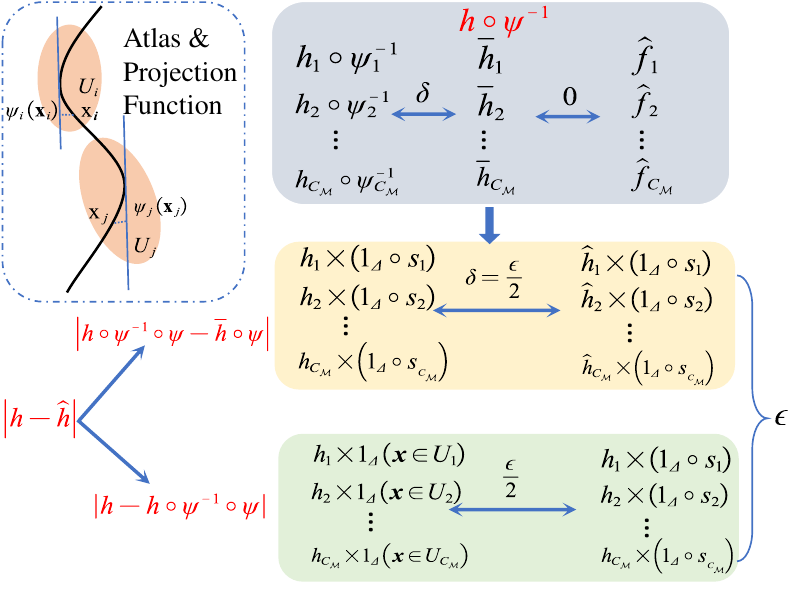}
    \caption{An overview of using a quadratic network to approximate function $h$ on a manifold.}
    \label{fig:manifold}
\end{figure}

Next, let us count the numbers of layers and parameters in the resultant network. \ff{i) For the Taylor polynomial approximator, we calculate in \textbf{Step 2}. There are totally $C_{\M}$ parallel $\bar{h}_i$, hence the depth is no more than $C_{\M}d^n(N+1)^d$ with $18C_{\M}d^n(N+1)^d(d+n-1)$ parameters for the quadratic network to approximate $h(\x)$. Where, $N \geq \Big(\frac{2^{d+1}C_{\M} d^n}{\epsilon n!} 2^{n+1}\sqrt{d}\mu_i \lambda_i (2r)^{1-\beta} d^{2/\beta} \Big)^{\frac{1}{n+\beta}}$. ii) For the back-projection function $\mathds{1}_{\Delta}$, it can be implemented by a single layer quadratic network. The approximation of the radial function $s_i$ can be implemented by one hidden layer quadratic network with $Cc_{\sigma}d^{3.75}$ neurons \cite{fan2020universal}. $c_{\sigma}$ depends on the error $\epsilon$. Therefore, the total number of parallel sub quadratic networks to compute $\mathds{1}_{\Delta}(s)$ has $C_\M$ depth layers with  $C_{\M}Cc_{\epsilon}d^{3.75}$ parameters. At last, combining two part of networks, the depth of the full quadratic network is no more than $C_{\M}(c_1d^n(N+1)^d+1)$ and the total number of quadratic neural network parameters are $C_{\M}(c_2d^n(N+1)^d(d+n-1)+c_3c_{\epsilon}d^{3.75})$. Where, $c_1, c_2,$ and $c_3$ are universal constants, $N$ and $c_\epsilon$ depends on the total error $\epsilon$. }

\end{proof}

\textbf{Remark 2 (Optimal Error).} Table \ref{tab:manifold} compares the numbers of parameters used in conventional and quadratic ReLU networks in approximating functions in H\"{o}lder space based on the scheme in \cite{chen2019efficient}. It can be seen that the quadratic construction saves the number of parameters exponentially. One may argue that in Subsections \ref{subsec:euclidean} and \ref{subsec:manifold}, we just show when representing functions with special schemes, quadratic networks are much more efficient instead of the general efficiency. However, there might be other schemes to represent functions, and it is not sure if quadratic networks can still achieve efficiency for those schemes. Here, we argue that despite the existence of other kinds of universal approximation representation schemes, the representation schemes we are using are the optimal \cite{devore1989optimal} in terms of achieving the lowest error rate. Thus, our result is still well-underpinned because when approximating functions in some space, the approximation scheme with the optimal error rate is usually considered first.

\begin{table}[htbp]
\centering
  \caption{Comparison of the number of parameters used in conventional and quadratic ReLU networks to approximate functions in H\"{o}lder space based on the approximation scheme in \cite{chen2019efficient}.}
 \scalebox{0.88}{ 
\begin{tabular}{l|l|c}
\hline
Network &    Operation    & \#Parameters  \\ \hline
\multirow{2}{*}{Conventional \cite{chen2019efficient} } & Product           & $\mathcal{O}(\ln({1/\epsilon}))$                 \\ \cline{2-3} 
                              & Total & $c'(\epsilon^{-\frac{d}{d+\alpha}}\log(1/\epsilon)+D\log(1/\epsilon)+D \log D)$        \\ \hline
\multirow{2}{*}{Quadratic (ours)} & Product           & 18        \\ \cline{2-3} 
                              & Total & \ff{$C_{\M}(c_2d^n(N+1)^d(d+n-1)+c_3c_{\epsilon}d^{3.75})$}        \\ \hline                              
\end{tabular}}
\label{tab:manifold}
\end{table}

\subsection{Efficiency: Barron Space Analysis}

Here, we justify the high efficiency of quadratic neurons from the perspective of Barron space. In neural network approximation theory, to achieve the same level of error, the complexity of the constructed network structure will increase with the dimension of functions \cite{shen2019deep, lu2021deep, Yarotsky2017Error}. This phenomenon is referred to as the curse of dimensionality. However, functions in the Barron space can be expressed by a one-hidden-layer network with an error rate that is not dimension-dependent but controlled by the norm of the function. The Barron space of conventional neurons, denoted as $\mathcal{B}_2^{(1)}$, was derived in \cite{ma2022barron, barron1993universal}. Here, we show that quadratic neurons also possess this dimension-independence property by deriving the Barron space of quadratic neurons, denoted as $\mathcal{B}_2^{(2)}$. Then, since $\mathcal{B}_2^{(2)}$ comprises of $\mathcal{B}_2^{(1)}$, using quadratic neurons to approximate functions in $\mathcal{B}_2^{(2)}\setminus\mathcal{B}_2^{(1)}$ is dimension-free, but using conventional neurons is dimension-dependent, thereby demonstrating the efficiency of quadratic neurons in terms of avoiding the curse of dimensionality.

\begin{definition}[Barron space of conventional neurons, $\mathcal{B}_{2}^{(1)}$ \cite{ma2022barron}]
The functions $f: X \mapsto \mathbb{R}$ that admit the following representation:
\begin{equation}
    f(\mathbf{x})=\int_{\Omega} a \sigma\left(\boldsymbol{b}^{T} \mathbf{x}+c\right) \rho(d a, d \boldsymbol{b}, d c), \quad \mathbf{x} \in X,
    \label{barron_space_l}
\end{equation}
where  $\Omega=\mathbb{R}^{1} \times \mathbb{R}^{d} \times \mathbb{R}^{1}$, $\rho$ is a probability distribution on  $\left(\Omega, \Sigma_{\Omega}\right)$ with $\Sigma_{\Omega}$  being a Borel  $\sigma$-algebra on $\Omega$, and $\sigma(x)=\max \{x, 0\}$  is the ReLU activation function. We define the Barron norm of the representation Eq. \eqref{barron_space_l} as
\begin{equation}
    \|f\|_{\mathcal{B}_{2}^{(1)}}=\inf _{\rho}\left(\mathbb{E}_{\rho}\left[|a|^{2}\left(\|\boldsymbol{b}\|_{1}+|c|\right)^{2}\right]\right)^{1 / 2}.
\end{equation}
The infimum is taken over all  $\rho$  for which Eq. \eqref{barron_space_l} holds for all $\mathbf{x} \in X$. The Barron space $\mathcal{B}_{2}^{(1)}$ constitutes the set of continuous functions that are represented by Eq. \eqref{barron_space_l} with finite Barron norm.

\end{definition}

Note that Eq. \eqref{barron_space_l} can be analogized to two-layer neural networks of $m$ neurons when $m\to \infty$:
\begin{equation}
    f_{m}(\mathbf{x} ; \Theta):=\frac{1}{m} \sum_{j=1}^{m} a_{j} \sigma\left(\boldsymbol{b}_{j}^{T} \mathbf{x}+c_{j}\right),
\end{equation}
where $\Theta=\left(a_{1}, \boldsymbol{b}_{1}, c_{1}, \ldots, a_{m}, \boldsymbol{b}_{m}, c_{m}\right) $ is a collection of all parameters. This is an intuition of why using conventional neurons to approximate functions in \cite{ma2022barron} is highly efficient. In addition, the Barron space is quite inclusive. According to Proposition 1 in \cite{barron1993universal}, given a function, as long as its Fourier distribution has an integrable
density as well as a finite first moment, it belongs to the Barron space. Therefore, the Barron space can already fulfill the need of most real-world tasks. Due to the inclusiveness of the Barron space, our efficiency result from the perspective of the Barron space makes general sense.

\begin{prop}[Theorem 1 in \cite{ma2022barron}]
For any $f \in \mathcal{B}_{2}^{(1)}$ and $m>0$, there exists a two-layer neural network $f_{m}(\mathbf{x} ; \Theta)=   \frac{1}{m} \sum_{k=1}^{m} a_{k} \sigma\left(\boldsymbol{b}_{k}^{T} \mathbf{x}+c_{k}\right)  (  \Theta$  denotes the parameters  $\left\{\left(a_{k}, \boldsymbol{b}_{k}, c_{k}\right) \right\}_{k=1}^m$  in the neural network), such that
\begin{equation}
    \left\|f(\cdot)-f_{m}(\cdot ; \Theta)\right\|^{2} \leq \frac{3\|f\|_{\mathcal{B}_{2}^{(1)}}^{2}}{m}.
\end{equation}
\label{forward}
\end{prop}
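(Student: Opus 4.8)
The plan is to establish this direct approximation bound by a Monte-Carlo (probabilistic) sampling argument, which is the standard route for Barron-type estimates. The starting point is that membership $f \in \mathcal{B}_{2}^{(1)}$ furnishes, by definition, a probability distribution $\rho$ on $\Omega=\mathbb{R}\times\mathbb{R}^{d}\times\mathbb{R}$ with $f(\mathbf{x})=\int_{\Omega} a\,\sigma(\boldsymbol{b}^{T}\mathbf{x}+c)\,\rho(da,d\boldsymbol{b},dc)=\mathbb{E}_{(a,\boldsymbol{b},c)\sim\rho}[\,a\,\sigma(\boldsymbol{b}^{T}\mathbf{x}+c)\,]$, and whose second moment can be taken within an arbitrarily small margin of $\|f\|_{\mathcal{B}_{2}^{(1)}}^{2}$. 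The key idea is to read $f(\mathbf{x})$ as an expectation and to realize a finite network as an empirical average of i.i.d.\ samples of the integrand.

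First I would draw $m$ i.i.d.\ parameter triples $\theta_{k}=(a_{k},\boldsymbol{b}_{k},c_{k})\sim\rho$ and set $f_{m}(\mathbf{x};\Theta)=\frac{1}{m}\sum_{k=1}^{m} a_{k}\,\sigma(\boldsymbol{b}_{k}^{T}\mathbf{x}+c_{k})$ with $\Theta=(\theta_{1},\dots,\theta_{m})$. By construction each summand has expectation $f(\mathbf{x})$, so $f_{m}$ is an unbiased estimator of $f$ pointwise: $\mathbb{E}_{\Theta}[f_{m}(\mathbf{x};\Theta)]=f(\mathbf{x})$ for every fixed $\mathbf{x}$. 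Next I would control the expected squared $L^{2}$ error over the random draw. Fixing $\mathbf{x}$, the variance of an average of $m$ i.i.d.\ terms gives $\mathbb{E}_{\Theta}\big[(f(\mathbf{x})-f_{m}(\mathbf{x};\Theta))^{2}\big]=\frac{1}{m}\,\mathrm{Var}_{\rho}\!\big(a\,\sigma(\boldsymbol{b}^{T}\mathbf{x}+c)\big)\leq\frac{1}{m}\,\mathbb{E}_{\rho}\!\big[a^{2}\,\sigma(\boldsymbol{b}^{T}\mathbf{x}+c)^{2}\big]$.

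The central estimate ties the integrand to the Barron norm: since $\sigma$ is $1$-Lipschitz with $\sigma(0)=0$ we have $|\sigma(\boldsymbol{b}^{T}\mathbf{x}+c)|\leq|\boldsymbol{b}^{T}\mathbf{x}+c|\leq\|\boldsymbol{b}\|_{1}\|\mathbf{x}\|_{\infty}+|c|\leq\|\boldsymbol{b}\|_{1}+|c|$ on the unit cube, so $\mathbb{E}_{\rho}[a^{2}\sigma(\boldsymbol{b}^{T}\mathbf{x}+c)^{2}]\leq\mathbb{E}_{\rho}[a^{2}(\|\boldsymbol{b}\|_{1}+|c|)^{2}]$. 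Integrating over $\mathbf{x}$ against the underlying probability measure on $X$ and applying Fubini yields $\mathbb{E}_{\Theta}\|f-f_{m}\|^{2}\leq\frac{1}{m}\,\mathbb{E}_{\rho}[a^{2}(\|\boldsymbol{b}\|_{1}+|c|)^{2}]$, and passing to the infimum over admissible $\rho$ bounds the right-hand side by $\|f\|_{\mathcal{B}_{2}^{(1)}}^{2}/m$.

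Finally, since the expected error over random $\Theta$ is at most $\|f\|_{\mathcal{B}_{2}^{(1)}}^{2}/m$, a single good realization must exist; by Markov's inequality the event $\{\|f-f_{m}\|^{2}>3\|f\|_{\mathcal{B}_{2}^{(1)}}^{2}/m\}$ has probability at most $1/3$, which leaves room, via a union bound, to simultaneously enforce an auxiliary path-norm control on $f_{m}$. The hard part will be exactly this last step: bare existence from the expectation alone would only give constant $1$, whereas producing a single parameter set $\Theta$ that meets the $L^{2}$ error bound and the path-norm bound \emph{at the same time} forces the probabilistic (Markov plus union bound) argument, and it is precisely this bookkeeping that inflates the constant to the stated $3$. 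The preceding variance computation and the estimate $|\sigma(\boldsymbol{b}^{T}\mathbf{x}+c)|\leq\|\boldsymbol{b}\|_{1}+|c|$ are routine once the probabilistic representation of $f$ is in hand.
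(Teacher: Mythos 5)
Your proposal is correct and follows essentially the same Monte--Carlo route the paper itself uses: this proposition is imported from the cited Barron-space work without proof, but the paper's proof of its quadratic analogue (the Direct Theorem of Quadratic Neurons) is exactly your argument --- fix a $\rho$ attaining the Barron norm up to a factor $(1+\varepsilon)$, form the empirical average of $m$ i.i.d.\ draws, bound the expected squared $L^2$ error by $(1+\varepsilon)\|f\|_{\mathcal{B}_2^{(1)}}^2/m$ via the variance computation and the estimate $|\sigma(\boldsymbol{b}^{T}\mathbf{x}+c)|\leq\|\boldsymbol{b}\|_{1}+|c|$, and extract a single good $\Theta$ by Markov's inequality. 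Your closing remark about the constant $3$ arising from a union bound with a simultaneous path-norm constraint matches the original source (the paper's quadratic version, which drops that constraint, accordingly obtains the constant $2$).
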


The inverse approximation theorem can also be proved. To claim the inverse approximation theorem, the following is defined:

\begin{equation}
\begin{aligned}
     & \mathcal{N}_{Q}^{(1)}=\{\frac{1}{m} \sum_{k=1}^{m} a_{k} \sigma\left(\boldsymbol{b}_{k}^{T} \mathbf{x}+c_{k}\right): \\
    & \frac{1}{m} \sum_{k=1}^{m}\left|a_{k}\right|\left(\left\|\boldsymbol{b}_{k}\right\|_{1}+\left|c_{k}\right|\right) \leq Q, m \in \mathbb{N}^{+} \}.  
\end{aligned}
\end{equation}

\begin{prop}[Theorem 2 in \cite{ma2022barron}]
Let $f^{*}$ be a continuous function on  $X$. Assume there exists a constant $Q$ and a sequence of functions  $\left(f_{m}\right) \subset \mathcal{N}_{Q}^{(1)}$ such that $f_{m}(\mathbf{x}) \rightarrow f^{*}(\mathbf{x})$
for all  $\mathbf{x} \in X$. Then there exists a probability distribution  $\rho^{*}$  on  $\left(\Omega, \Sigma_{\Omega}\right)$ such that
\begin{equation}
    f^{*}(\mathbf{x})=\int a \sigma\left(\boldsymbol{b}^{T} \mathbf{x}+c\right) \rho^{*}(d a, d \boldsymbol{b}, d c),
\end{equation}
for all $ \mathbf{x} \in X$ . Furthermore, we have  $f^{*} \in \mathcal{B}_2^{(1)}$  with
\begin{equation}
    \left\|f^{*}\right\|_{\mathcal{B}_2^{(1)}} \leq Q.
\end{equation}
\label{inverse}
\end{prop}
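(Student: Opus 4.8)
The plan is to realize each $f_m$ as an integral of the feature map $\sigma(\boldsymbol{b}^{T}\mathbf{x}+c)$ against a discrete measure on the parameter space and then extract a weak-$*$ limit of these measures to represent $f^*$. Writing $f_m(\mathbf{x})=\frac{1}{m}\sum_{k=1}^{m} a_k\sigma(\boldsymbol{b}_k^{T}\mathbf{x}+c_k)$, the naive empirical measure $\frac{1}{m}\sum_k\delta_{(a_k,\boldsymbol{b}_k,c_k)}$ lives on the noncompact set $\Omega=\mathbb{R}\times\mathbb{R}^{d}\times\mathbb{R}$, so mass may escape to infinity and no limit need exist. To repair this I would exploit the positive homogeneity of the ReLU: since $\sigma(\lambda t)=\lambda\sigma(t)$ for $\lambda>0$, for each term with $(\boldsymbol{b}_k,c_k)\neq 0$ I set $r_k=\|\boldsymbol{b}_k\|_1+|c_k|$ and rescale to $\hat{\boldsymbol{b}}_k=\boldsymbol{b}_k/r_k$, $\hat c_k=c_k/r_k$, $\tilde a_k=a_k r_k$, so that $(\hat{\boldsymbol{b}}_k,\hat c_k)$ lies on the compact sphere $S=\{(\boldsymbol{b},c):\|\boldsymbol{b}\|_1+|c|=1\}$ and $a_k\sigma(\boldsymbol{b}_k^{T}\mathbf{x}+c_k)=\tilde a_k\sigma(\hat{\boldsymbol{b}}_k^{T}\mathbf{x}+\hat c_k)$. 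Terms with $r_k=0$ satisfy $\sigma(\boldsymbol{b}_k^{T}\mathbf{x}+c_k)=\sigma(0)=0$ and may simply be dropped.

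This lets me encode $f_m(\mathbf{x})=\int_S \sigma(\boldsymbol{b}^{T}\mathbf{x}+c)\,d\mu_m(\boldsymbol{b},c)$, where $\mu_m=\frac{1}{m}\sum_k\tilde a_k\,\delta_{(\hat{\boldsymbol{b}}_k,\hat c_k)}$ is a signed measure on $S$ with total variation $\|\mu_m\|_{\mathrm{TV}}\le\frac{1}{m}\sum_k|a_k|(\|\boldsymbol{b}_k\|_1+|c_k|)\le Q$, which is exactly the defining constraint of $\mathcal{N}_Q^{(1)}$. Since $S$ is compact, the family $\{\mu_m\}$ is bounded in the dual of $C(S)$; by the Banach--Alaoglu theorem (weak-$*$ sequential compactness of bounded measures on a compact metric space) there is a subsequence $\mu_{m_j}\rightharpoonup\mu^*$, and by weak-$*$ lower semicontinuity of the total variation, $\|\mu^*\|_{\mathrm{TV}}\le\liminf_j\|\mu_{m_j}\|_{\mathrm{TV}}\le Q$. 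For each fixed $\mathbf{x}$ the map $(\boldsymbol{b},c)\mapsto\sigma(\boldsymbol{b}^{T}\mathbf{x}+c)$ is continuous on $S$, so $\int_S\sigma(\boldsymbol{b}^{T}\mathbf{x}+c)\,d\mu_{m_j}\to\int_S\sigma(\boldsymbol{b}^{T}\mathbf{x}+c)\,d\mu^*$; combined with the hypothesis $f_{m_j}(\mathbf{x})\to f^*(\mathbf{x})$ this gives $f^*(\mathbf{x})=\int_S\sigma(\boldsymbol{b}^{T}\mathbf{x}+c)\,d\mu^*(\boldsymbol{b},c)$ for every $\mathbf{x}\in X$.

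It remains to recast this signed-measure representation in the required probabilistic form. Let $M=\|\mu^*\|_{\mathrm{TV}}\le Q$ and let $g=d\mu^*/d|\mu^*|$ be the polar (Radon--Nikodym) density, so $|g|=1$ holds $|\mu^*|$-almost everywhere. I would define $\rho^*$ as the pushforward of the probability measure $|\mu^*|/M$ under $(\boldsymbol{b},c)\mapsto(a,\boldsymbol{b},c)=(Mg(\boldsymbol{b},c),\boldsymbol{b},c)$. A direct change of variables then gives $\int_\Omega a\,\sigma(\boldsymbol{b}^{T}\mathbf{x}+c)\,\rho^*(da,d\boldsymbol{b},dc)=\int_S g\,\sigma(\boldsymbol{b}^{T}\mathbf{x}+c)\,d|\mu^*|=\int_S\sigma(\boldsymbol{b}^{T}\mathbf{x}+c)\,d\mu^*=f^*(\mathbf{x})$, the desired representation. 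Finally, since $\rho^*$ is supported where $\|\boldsymbol{b}\|_1+|c|=1$ and $|a|\equiv M$, one computes $\mathbb{E}_{\rho^*}[|a|^2(\|\boldsymbol{b}\|_1+|c|)^2]=M^2$, hence $\|f^*\|_{\mathcal{B}_2^{(1)}}\le M\le Q$ and $f^*\in\mathcal{B}_2^{(1)}$.

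The main obstacle is the compactness step: because the raw parameters $(a,\boldsymbol{b},c)$ are a priori unbounded, one cannot extract a convergent subsequence of the networks directly, and a careless limit would lose mass to infinity. The homogeneity rescaling onto the compact sphere $S$ is precisely what converts the $L^1$-type norm bound defining $\mathcal{N}_Q^{(1)}$ into a uniform total-variation bound while confining the frequency parameters to a compact set, so that weak-$*$ compactness applies; the amplitudes need never be controlled individually, only through the total mass of $\mu^*$. The remaining work (the polar decomposition and the two changes of variables) is routine measure theory and does not require $X$ to be compact, since weak-$*$ convergence is tested against the fixed continuous functions $(\boldsymbol{b},c)\mapsto\sigma(\boldsymbol{b}^{T}\mathbf{x}+c)$ one point $\mathbf{x}$ at a time.
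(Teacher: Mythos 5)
Your argument is correct, but note first that the paper does not actually prove this proposition: it is imported verbatim as Theorem~2 of the cited Barron-space reference, and the only proof of this type in the paper is for the quadratic analogue (Theorem~\ref{backward_q}). Comparing against that proof, the underlying idea is the same --- use positive homogeneity of the ReLU to confine the direction parameters to a compact set, represent each finite network as an integral against a measure, and extract a convergent subsequence --- but the bookkeeping differs. The paper keeps the amplitude $a$ as a coordinate of a \emph{probability} measure $\rho_m$ (atoms $\alpha_k=|a_k|/A$ with $a$ set to $\pm A/m$) and invokes tightness plus Prokhorov's theorem, whereas you fold the amplitude into a \emph{signed} measure $\mu_m$ on the sphere $S=\{\|\boldsymbol{b}\|_1+|c|=1\}$, get the uniform bound $\|\mu_m\|_{\mathrm{TV}}\le Q$ directly from the defining constraint of $\mathcal{N}_Q^{(1)}$, apply Banach--Alaoglu with lower semicontinuity of the total variation, and only at the end convert back to a probability measure via the polar decomposition and a pushforward. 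The two compactness devices are essentially interchangeable here, but your route has the advantage of making explicit the rescaling step and the source of the bound $\|f^*\|_{\mathcal{B}_2^{(1)}}\le Q$, both of which the paper's sketch of the quadratic case glosses over (it asserts $\operatorname{supp}(\rho_m)\subset\mathcal{N}_Q^{(2)}$, conflating the parameter set with the function class, and never states the normalization). The only loose ends in your write-up are trivial: the degenerate case $M=\|\mu^*\|_{\mathrm{TV}}=0$ (then $f^*\equiv 0$ and any $\rho^*$ concentrated on $a=0$ works), and the remark that the polar density $g$ is Borel so the pushforward is well defined.
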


Propositions \ref{forward} and \ref{inverse} show that the Barron space $\mathcal{B}_2^{(1)}$ is the space specifically for a two-layer network with conventional neurons that admits the direct and inverse approximation theorems. Approximating functions in $\mathcal{B}_2^{(1)}$ using a two-layer conventional network suffers no curse of dimensionality. In the same vein, we can also derive the Barron space for a two-layer quadratic network. Now let us first define the Barron space of quadratic neurons:

\begin{definition}[Barron space of quadratic neurons, $\mathcal{B}_{2}^{(2)}$]
Functions $g: X \mapsto \mathbb{R}$ that admit the following representation:
\begin{equation}
\scalebox{0.92}{$
\begin{aligned}
        &g(\mathbf{x})=  \\
    &\int_{\Omega} a \sigma((\w_1^{\top}\x+b_1)(\w_2^{\top}\x+b_2)+\w_3^{\top}(\x\odot\x)+c) \rho(d a, d \w, d \b),
    \label{barron_space}
\end{aligned}$}
\end{equation}
where $\mathbf{x} \in X$, $\Omega=\mathbb{R}^{1} \times \mathbb{R}^{3d} \times \mathbb{R}^{3}$, $\w=(\w_1,\w_2,\w_3)$, $\b=(b_1,b_2,b_3)$, $\rho$ is a probability distribution on $\left(\Omega, \Sigma_{\Omega}\right)$, with $\Sigma_{\Omega}$  being a Borel  $\sigma$-algebra on $\Omega$, and $\sigma(x)=\max \{x, 0\}$  is the ReLU activation function. We define the Barron norm of the representation Eq. \eqref{barron_space} as the following:
\begin{equation}
\scalebox{0.95}{$
\begin{aligned}
    & \|g\|_{\mathcal{B}_{2}^{(2)}}= \\
    & \inf _{\rho}\left(\mathbb{E}_{\rho}\left[|a|^{2}(\left(\|\w\|_{1}+|b_1|\right)\left(\|\w_2\|_{1}+|b_2|\right)+\|\w_3\|_{1}+|b_3|)^{2}\right]\right)^{1 / 2}, 
\end{aligned}$}   
\end{equation}
The infimum is taken over all  $\rho$  for which Eq. \eqref{barron_space} holds for all $\mathbf{x} \in X$. The Barron space $\mathcal{B}_{2}^{(2)}$ is the set of continuous functions that can be represented by Eq. \eqref{barron_space} with finite Barron norm.

\end{definition}

Now, we prove the direct approximation theorem for a two-layer quadratic neural network. 

\begin{theorem}[Direct Theorem of Quadratic Neurons]
For any $g \in \mathcal{B}_2^{(2)}$  and  $m>0$ , there exists a two-layer neural network
    \begin{equation}
    \begin{aligned}
              & g_{m}(\mathbf{x} ; \Theta) \\
          =  & \frac{1}{m} \sum_{k=1}^{m} a_k \sigma((\w_{1k}^{\top}\x+b_{1k})(\w_{2k}^{\top}\x+b_{2k})+\w_{3k}^{\top}(\x\odot\x)+b_k), 
    \end{aligned}
    \end{equation}
where $\Theta$ denotes a collection of parameters $\left\{\left(a_{k}, \w_{k}, \b_{k}\right), k \in[m]\right\}$ in the neural network such that
\begin{equation}
    \left\|g(\cdot)-g_{m}(\cdot ; \Theta)\right\|^{2} \leq \frac{2\|g\|_{\mathcal{B}_2^{(2)}}^{2}}{m}.
\end{equation}
\label{forward_q}
\end{theorem}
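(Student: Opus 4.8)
The plan is to mimic the proof of Proposition \ref{forward} (Theorem 1 in \cite{ma2022barron}) via a Monte-Carlo / random-feature argument, since the quadratic case differs only in the choice of the feature map. First I would fix $g \in \mathcal{B}_2^{(2)}$ and, by the definition of the Barron norm, choose a probability distribution $\rho$ on $\Omega$ realizing the integral representation Eq. \eqref{barron_space} with $\mathbb{E}_\rho[|a|^2((\|\w_1\|_1+|b_1|)(\|\w_2\|_1+|b_2|)+\|\w_3\|_1+|b_3|)^2] \leq \|g\|_{\mathcal{B}_2^{(2)}}^2 + \delta$ for arbitrarily small $\delta>0$. I would then view $g(\mathbf{x})$ as the expectation $\mathbb{E}_\rho[a\,\sigma(\cdots)]$ of the single quadratic feature $\varphi(\mathbf{x};a,\w,\b)=a\,\sigma((\w_1^\top\x+b_1)(\w_2^\top\x+b_2)+\w_3^\top(\x\odot\x)+b_3)$.

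The central step is to draw i.i.d.\ samples $(a_k,\w_k,\b_k)\sim\rho$ for $k\in[m]$ and set $g_m(\mathbf{x};\Theta)=\frac{1}{m}\sum_{k=1}^m \varphi(\mathbf{x};a_k,\w_k,\b_k)$, so that $g_m$ is an unbiased estimator of $g$ at every $\x$. Taking expectation over the sampling, the squared $L^2$ error decomposes as a variance term: $\mathbb{E}_\Theta\|g-g_m\|^2 = \frac{1}{m}\big(\mathbb{E}_\rho\|\varphi(\cdot;a,\w,\b)\|^2 - \|g\|^2\big) \leq \frac{1}{m}\mathbb{E}_\rho\|\varphi(\cdot;a,\w,\b)\|^2$. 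Hence it suffices to bound the second moment $\mathbb{E}_\rho\|\varphi\|^2$ by (a constant times) the squared Barron norm. Because there exists at least one realization achieving the expected bound, a probabilistic existence argument then produces the desired $\Theta$.

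The key pointwise estimate is the control of $|\sigma((\w_1^\top\x+b_1)(\w_2^\top\x+b_2)+\w_3^\top(\x\odot\x)+b_3)|$. Using $|\sigma(t)|\le|t|$, the $1$-norm/$\infty$-norm Hölder inequality $|\w_i^\top\x|\le\|\w_i\|_1\|\x\|_\infty$, and the normalization $\x\in X$ (with $\|\x\|_\infty\le 1$, so $\|\x\odot\x\|_\infty\le 1$), I would bound the argument by $(\|\w_1\|_1+|b_1|)(\|\w_2\|_1+|b_2|)+\|\w_3\|_1+|b_3|$, exactly the quantity appearing inside the Barron norm. Squaring and multiplying by $a^2$, then integrating against $\rho$, yields $\mathbb{E}_\rho\|\varphi\|^2 \le \mathbb{E}_\rho[|a|^2((\|\w_1\|_1+|b_1|)(\|\w_2\|_1+|b_2|)+\|\w_3\|_1+|b_3|)^2] \le \|g\|_{\mathcal{B}_2^{(2)}}^2+\delta$. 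Combining with the variance bound and letting $\delta\to 0$ gives $\mathbb{E}_\Theta\|g-g_m\|^2 \le \|g\|_{\mathcal{B}_2^{(2)}}^2/m$, which is even sharper than the stated constant $2$; the looser constant $2$ in the statement comfortably absorbs the $\delta$ slack and any boundary effects from the domain $X$.

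The main obstacle will be the product structure of the quadratic argument: unlike the linear Barron case, the feature contains a bilinear term $(\w_1^\top\x+b_1)(\w_2^\top\x+b_2)$, so the bound must be matched precisely to the mixed product form defining $\|g\|_{\mathcal{B}_2^{(2)}}$. I would need to verify carefully that the norm chosen in the definition is exactly the quantity that the Hölder bound on $\varphi$ produces, so that the second-moment estimate closes without an extra dimension-dependent factor. Provided $X$ is the unit cube (or otherwise satisfies $\|\x\|_\infty\le 1$), the bound on the quadratic term $\w_3^\top(\x\odot\x)$ is also dimension-free, which is precisely what makes the whole argument escape the curse of dimensionality and distinguishes it from the worst-case Sobolev-type rates.
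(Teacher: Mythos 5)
Your proposal is correct and follows essentially the same route as the paper's proof: represent $g$ via a distribution $\rho$ nearly attaining the Barron norm, form the Monte-Carlo average of i.i.d.\ quadratic features, bound the expected squared error by $\tfrac{1}{m}\mathbb{E}_\rho[\phi^2]$, and conclude existence probabilistically. The only (harmless) differences are that you invoke the first-moment method where the paper uses Markov's inequality on the event $\{\mathcal{E}(\Theta)<2\|g\|^2_{\mathcal{B}_2^{(2)}}/m\}$, and you explicitly supply the pointwise H\"older bound $|\sigma(\cdot)|\le(\|\w_1\|_1+|b_1|)(\|\w_2\|_1+|b_2|)+\|\w_3\|_1+|b_3|$ that the paper leaves implicit.
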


\begin{proof}
Let $\rho$ be a probability distribution such that 
\begin{equation}
    g(\mathbf{x})=\mathbb{E}_{\rho}[a \sigma((\w_1^{\top}\x+b_1)(\w_2^{\top}\x+b_2)+\w_3^{\top}(\x\odot\x)+c)]
\end{equation}
and 
\begin{equation}
\begin{aligned}
& \mathbb{E}_{\rho}\left[|a|^{2}(\left(\|\w\|_{1}+|b_1|\right)\left(\|\w_2\|_{1}+|b_2|\right)+\|\w_3\|_{1}+|b_3|)^{2}\right] \\
\leq &(1+\varepsilon)\|g\|_{\mathcal{B}_{2}}^{2}, 
\end{aligned}
\end{equation}
where $\varepsilon$ will be determined later. Let $\phi(\mathbf{x}; \Theta)=a \sigma((\w_1^{\top}\x+b_1)(\w_2^{\top}\x+b_2)+\w_3^{\top}(\x\odot\x)+c)$  with  $\Theta=(a, \w, \b) \sim \rho$. Then we have $\mathbb{E}_{\Theta \sim \rho}[\phi(\mathbf{x} ; \Theta)]=g(\mathbf{x})$. Let  $\Theta=\left\{\Theta_{j}\right\}_{j=1}^{m}$ be \textit{i.i.d.} random variables drawn from  $\rho(\cdot)$, and consider the empirical average of $\phi(\mathbf{x}; \Theta)$:
\begin{equation}
    \hat{g}_{m}(\mathbf{x};\Theta)=\frac{1}{m} \sum_{j=1}^{m} \phi\left(\mathbf{x};\theta_{j}\right).
\end{equation}

Now let us calculate the approximation error $\mathcal{E}(\Theta)=\mathbb{E}_{\mathbf{x}}\left[\left|\hat{g}_{m}(\mathbf{x};\Theta)-g(\mathbf{x})\right|^{2}\right]$. Its expectation with respect to $\Theta$ is
\begin{equation}
\begin{aligned}
&~~~~~\mathbb{E}_{\Theta}[\mathcal{E}(\Theta)] \\
& =\mathbb{E}_{\Theta} \mathbb{E}_{\mathbf{x}}\left|\hat{g}_{m}(\mathbf{x} ; \Theta)-f(\mathbf{x})\right|^{2} \\
& =\mathbb{E}_{\mathbf{x}} \mathbb{E}_{\Theta}\left|\frac{1}{m} \sum_{j=1}^{m} \phi\left(\mathbf{x} ; \Theta_{j}\right)-g(\mathbf{x})\right|^{2} \\
& =\frac{1}{m^{2}} \mathbb{E}_{\mathbf{x}} \sum_{j, k=1}^{m} \mathbb{E}_{\Theta_{j}, \Theta_{k}}\left[\left(\phi\left(\mathbf{x} ; \Theta_{j}\right)-f(\mathbf{x})\right)\left(\phi\left(\mathbf{x} ; \Theta_{k}\right)-g(\mathbf{x})\right)\right] \\
& \leq \frac{1}{m^{2}} \sum_{j=1}^{m} \mathbb{E}_{\mathbf{x}} \mathbb{E}_{\Theta_{j}}\left[\left(\phi\left(\mathbf{x} ; \Theta_{j}\right)-g(\mathbf{x})\right)^{2}\right] \\
& \leq \frac{1}{m} \mathbb{E}_{\mathbf{x}} \mathbb{E}_{\Theta \sim \rho}\left[\phi^{2}(\mathbf{x};\Theta)\right] \\
& \leq \frac{(1+\varepsilon)\|g\|_{\mathcal{B}_{2}}^{2}}{m}.
\end{aligned}
\end{equation}

Define the event $E=\left\{\mathcal{E}(\Theta)<\frac{2\|g\|_{\mathcal{B}_{2}}^{2}}{m}\right\}$. By Markov inequality, we have
\begin{equation}
    \mathbb{P}\left\{E\right\}=1-\mathbb{P}\left\{E_{c}\right\} \geq 1-\frac{\mathbb{E}_{\Theta}[\mathcal{E}(\Theta)]}{2\|g\|_{\mathcal{B}_{2}}^{2} / m} \geq \frac{1-\varepsilon}{2}>0.
\end{equation}
Let $\epsilon<1$, then $\mathbb{P}\left\{E\right\}>0$. Therefore, by choosing any $\Theta$ in $E$, the network model defined by this $\Theta$ concludes the proof.
\end{proof}

Now, we prove the inverse approximation theorem for a two-layer network of quadratic neurons. First, for convenience, we prescribe that when $m$ goes to infinity, the output function of a one-hidden-layer quadratic network is still bounded.

\begin{equation}
\scalebox{0.92}{$
\begin{aligned}
     &\mathcal{N}_{Q}^{(2)} \\
     =&\{\frac{1}{m} \sum_{k=1}^{m} a_k \sigma((\w_{1k}^{\top}\x+b_{1k})(\w_{2k}^{\top}\x+b_{2k})+\w_{3k}^{\top}(\x\odot\x)+b_{3k}): \\
    & \frac{1}{m} \sum_{k=1}^{m} \left|a_k\right|\left((\Vert\w_{1k}\Vert_1+\left|b_{1k}\right|)\left(\left\|\w_{2k}\right\|_{1}+\left|b_{2k}\right|\right)+\Vert\w_{3k}\Vert_1+|b_{3k}|\right) \\
    & \leq Q, m \in \mathbb{N}^{+} \}.  
\end{aligned}$}
\end{equation}

\begin{theorem}
Let $g^{*}$ be a continuous function on $X$. Assume there exists a constant $Q$ and a sequence of functions $\left(g_{m}\right) \subset \mathcal{N}_{Q}^{(2)}$ such that
$g_{m}(\mathbf{x}) \rightarrow g^{*}(\mathbf{x})$
for all $\mathbf{x} \in X$. Then there exists a probability distribution  $\rho^{*}$ on $\left(\Omega, \Sigma_{\Omega}\right)$ such that
\begin{equation}
\scalebox{0.92}{$
\begin{aligned}
    & g^{*}(\mathbf{x})\\
    =&\int_{\Omega} a \sigma((\w_1^{\top}\x+b_1)(\w_2^{\top}\x+b_2)+\w_3^{\top}(\x\odot\x)+b_3) \rho(d a, d \w, d \b),
\end{aligned}$}
\end{equation}
for all $\mathbf{x} \in X$. Furthermore, we have  $g^{*} \in \mathcal{B}_2^{(2)}$  with
\begin{equation}
    \left\|g^{*}\right\|_{\mathcal{B}_2^{(2)}} \leq Q. 
\end{equation}
\label{backward_q}
\end{theorem}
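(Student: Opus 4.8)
The plan is to mirror the proof of the conventional inverse theorem (Proposition~\ref{inverse}), replacing the single positive-homogeneous normalization of a linear form $\b^\top\x+c$ by a normalization adapted to the bilinear-plus-quadratic argument of a quadratic neuron. Write each element of the sequence as $g_m(\x)=\frac{1}{m}\sum_{k=1}^m a_k\,\sigma(z(\x;\Theta_k))\in\mathcal{N}_Q^{(2)}$, where $z(\x;\Theta)=(\w_1^\top\x+b_1)(\w_2^\top\x+b_2)+\w_3^\top(\x\odot\x)+b_3$ and $\Lambda(\Theta):=(\|\w_1\|_1+|b_1|)(\|\w_2\|_1+|b_2|)+\|\w_3\|_1+|b_3|$, so that the defining constraint of $\mathcal{N}_Q^{(2)}$ reads $\frac{1}{m}\sum_k|a_k|\Lambda(\Theta_k)\le Q$. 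I would first record the key homogeneity: scaling $(\w_1,b_1)\mapsto\lambda_1(\w_1,b_1)$, $(\w_2,b_2)\mapsto\lambda_2(\w_2,b_2)$, and $(\w_3,b_3)\mapsto\lambda_1\lambda_2(\w_3,b_3)$ with $\lambda_1,\lambda_2>0$ multiplies \emph{both} $z$ and $\Lambda$ by $\lambda_1\lambda_2$, so by ReLU positive homogeneity the quantities $a\,\sigma(z(\x;\cdot))$ and $|a|\Lambda$ are invariant once the overall factor is absorbed into $a$.

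For each contributing neuron (those with $a_k\neq0$ and $\Lambda(\Theta_k)>0$) I would use this freedom to impose two normalizations simultaneously: balance the two factors, $\|\w_1\|_1+|b_1|=\|\w_2\|_1+|b_2|$, and fix the scale, $\Lambda=1$. Solving these gives $\lambda_1\lambda_2=1/\Lambda(\Theta_k)$ with equal factor norms, after which $\|\w_1\|_1+|b_1|$, $\|\w_2\|_1+|b_2|$, and $\|\w_3\|_1+|b_3|$ all lie in $[0,1]$; degenerate neurons in which a product factor vanishes are rewritten with both product factors set to zero, which alters neither $z$ nor $\Lambda$. Hence every contributing neuron can be placed in a normalized form $\hat\Theta_k$ lying in a fixed compact set $V\subset\R^{3d+3}$. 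Setting $A_k=a_k\Lambda(\Theta_k)$ yields $g_m(\x)=\frac{1}{m}\sum_k A_k\,\sigma(z(\x;\hat\Theta_k))$ with $\frac{1}{m}\sum_k|A_k|\le Q$, which I encode as the signed measure $\mu_m=\frac{1}{m}\sum_k A_k\,\delta_{\hat\Theta_k}$ on $V$, so that $g_m(\x)=\int_V\sigma(z(\x;\Theta))\,d\mu_m(\Theta)$ and $\|\mu_m\|_{\mathrm{TV}}\le Q$.

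Next I would extract the limit. Since $V$ is a compact metric space, $C(V)$ is separable and the Banach--Alaoglu theorem gives a subsequence $\mu_{m_j}$ converging weakly-$*$ to a finite signed measure $\mu^*$ on $V$ with $\|\mu^*\|_{\mathrm{TV}}\le Q$. For each fixed $\x$ the map $\Theta\mapsto\sigma(z(\x;\Theta))$ is continuous and bounded on $V$ (it is a fixed polynomial in the parameters composed with ReLU), so $g_{m_j}(\x)\to\int_V\sigma(z(\x;\Theta))\,d\mu^*$; comparing with the hypothesis $g_{m_j}(\x)\to g^*(\x)$ identifies $g^*(\x)=\int_V\sigma(z(\x;\Theta))\,d\mu^*$ for all $\x\in X$. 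Finally I would recover a probability distribution and the norm bound: let $M=\|\mu^*\|_{\mathrm{TV}}\le Q$; if $M=0$ then $g^*\equiv0$ and any $\rho^*$ concentrated on $a=0$ works, while if $M>0$ I decompose $\mu^*=h\,|\mu^*|$ with a $\{-1,+1\}$-valued sign density $h$ and define $\rho^*$ as the pushforward of $\frac{1}{M}|\mu^*|$ under $\Theta\mapsto(a,\Theta)=(M\,h(\Theta),\Theta)$. Then $\rho^*$ is a probability measure on $\Omega$ with $g^*(\x)=\int_\Omega a\,\sigma(z(\x;\Theta))\,d\rho^*$, and since $\Lambda\equiv1$ and $|a|\equiv M$ on its support, $\mathbb{E}_{\rho^*}[|a|^2\Lambda^2]=M^2\le Q^2$, whence $g^*\in\mathcal{B}_2^{(2)}$ with $\|g^*\|_{\mathcal{B}_2^{(2)}}\le Q$.

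The main obstacle is the normalization step. For a conventional neuron the single linear form is degree-one homogeneous, so fixing $\|\b\|_1+|c|=1$ already confines the parameters to a compact sphere; for a quadratic neuron the argument is \emph{bilinear} in $(\w_1,b_1)$ and $(\w_2,b_2)$, so merely fixing the overall scale $\Lambda=1$ leaves a free hyperbola $\lambda_1\lambda_2=\mathrm{const}$ along which one factor may blow up while the other shrinks. Without the extra balancing constraint the normalized parameter set fails to be compact and the weak-$*$ limit need not exist. Pinning down both the scale and the split, together with the bookkeeping for degenerate neurons in which a product factor vanishes, is the technical heart of the argument; once compactness is secured, the remaining measure-theoretic steps proceed exactly as in the conventional case.
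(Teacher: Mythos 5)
Your proof follows essentially the same route as the paper's: normalize the neuron parameters onto a compact set with the scale absorbed into the outer coefficients, encode each $g_m$ as a measure of total mass at most $Q$ supported there, extract a weak-$*$/Prokhorov limit, and pass to the limit in the integral by continuity of the integrand. If anything, your handling of the normalization is more careful than the paper's, which merely asserts that the parameters can be placed in the set $\{|a|\leq Q,\ (\Vert\w_1\Vert_1+|b_1|)(\Vert\w_2\Vert_1+|b_2|)+\Vert\w_3\Vert_1+|b_3|\leq 1\}$; you correctly note that the bilinear homogeneity leaves a noncompact hyperbola of rescalings and that the extra balancing constraint $\Vert\w_1\Vert_1+|b_1|=\Vert\w_2\Vert_1+|b_2|$ is needed to actually land in a compact parameter set.
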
 

\begin{proof}

Let $\Theta_{m}=\left\{\left(a_{k}^{(m)}, \w_{1k}^{(m)}, \w_{2k}^{(m)}, \w_{3k}^{(m)}, \b_{k}^{(m)}\right)\right\}_{k=1}^{m}$  be the parameters in $g_{m}$ and let  $A=\sum_{k=1}^{m}\left|a_{k}^{(m)}\right|$ and $\alpha_{k}=\frac{\left|a_{k}^{(m)}\right|}{A}$. Then we can define a probability measure:
\begin{equation}
\scalebox{0.92}{$
\begin{aligned}
        & \rho_{m} \\
        =&\sum_{k=1}^{m} \alpha_{k} \delta\left(a-\frac{\operatorname{sign}\left(a_{k}^{(m)}\right) A}{m}\right) \delta\left(\w-\w_{k}^{(m)}\right) \delta\left(\boldsymbol{b}-\boldsymbol{b}_{k}^{(m)}\right),
\end{aligned}$}
\end{equation}
which satisfies

\begin{equation}
g_{m}\left(\mathbf{x} ; \Theta_{m}\right)=\int a \sigma\left(\boldsymbol{b}^{T} \mathbf{x}+c\right) \rho_{m}(d a, d \w, d \b).
\end{equation}
We can construct $(a, \w, \b)$ from the following set 
\begin{equation}
\begin{aligned}
& \{(a, \w, \b):|a_k| \leq Q, \\
&(\Vert\w_{1k}\Vert_1+\left|b_{1k}\right|)\left(\left\|\w_{2k}\right\|_{1}+\left|b_{2k}\right|\right)+\Vert\w_{3k}\Vert_1+|c| \leq 1\} \\
\end{aligned}
\end{equation}
such that for all $m$, $\operatorname{supp}\left(\rho_{m}\right) \subset \mathcal{N}_{Q}^{(2)}$. Since $\mathcal{N}_{Q}^{(2)}$ is compact, the sequence $\left(\rho_{m}\right)$ is tight, as $m \to \infty$. By Prokhorov's Theorem, there exists a subsequence  $\left(\rho_{m_{t}}\right)$ that weakly converges to a probability measure  $\rho^{*}$.
What's more, since $\operatorname{supp}\left(\rho_{m}\right) \subset \mathcal{N}_{Q}^{(2)}$, we also have  $\operatorname{supp}\left(\rho^{*}\right) \subset \mathcal{N}_{Q}^{(2)}$. Therefore, we have
\begin{equation}
\left\|g^{*}\right\|_{\mathcal{B}} \leq Q .
\end{equation}
Furthermore, since $\rho^{*} $ is the weak limit of $\rho_{m_{k}}$, we have

\begin{equation}
\scalebox{0.9}{$
\begin{aligned}
    & g^{*}(\mathbf{x}) \\
    =&\lim _{k \rightarrow \infty} \int_{\Omega} a \sigma((\w_1^{\top}\x+b_1)(\w_2^{\top}\x+b_2)+\w_3^{\top}(\x\odot\x)+c) d \rho_{m_{t}} \\
    = & \int a \sigma((\w_1^{\top}\x+b_1)(\w_2^{\top}\x+b_2)+\w_3^{\top}(\x\odot\x)+c)  d \rho^{*}(d a, d \w, d\b).
\end{aligned}$}
\end{equation}

\end{proof}

\textbf{Remark 3.} Although theoretical results from perspectives of the Sobolov space,  manifolds, and the Barron space are encouraging, our overall opinions are twofold. First, the efficiency of quadratic networks should hold in general because our established theoretical derivation has no unrealistic assumptions, and the investigated functional spaces are inclusive. Second, we argue that how much efficiency can be gained is genuinely dependent on the task. On the one hand, our theoretical results are constructive. The quadratic network may not learn the mapping as our construction suggests because the optimization of a neural network is a highly non-convex issue. On the other hand, our theoretical derivation is related to the approximation ability characterization. In real-world tasks, the generalization ability of a network should also be taken into account. For example, if well-curated big data are not available, the overfitting issue may arise such that the efficiency of quadratic networks may not be achieved.

\section{Synthetic Experiments}

To put our work in perspective, we construct two insightful synthetic examples (high-dimensional concentric hyperspheres and Gaussian mixture data) to illustrate the efficiency of quadratic networks. The high-dimensional concentric hypersphere experiments highlight that using quadratic networks is free of the curse of dimensionality, which corroborates our Barron space analysis. The Gaussian mixture data experiments implicate that the efficiency of quadratic networks holds for broad tasks since it is well known that the Gaussian mixture model (GMM) is the most widely used model for describing data distribution, and data classification is one of the foundational tasks in machine learning. Note that the previous work in \cite{qi2022superiority} has already used quadratic networks to classify the Gaussian mixture data in 2-dimensional space. Here, we generalize this work into high-dimensional space, and our emphasis is efficiency.

\subsection{Concentric Hyperspheres}

We generate two concentric hyperspheres in $d$-dimensional space. Each has $2,000$ samples slightly perturbed by Gaussian noise with a mean of $0$ and a standard deviation of $0.03$. The radii of two hyperspheres are 1 and 0.7, respectively. We assume that two hyperspheres come from two classes. A total of $4,000$ samples are shuffled and split into training and test sets according to the ratio of $0.8:0.2$. We employ one-hidden-layer ReLU conventional and quadratic networks to classify concentric hyperspheres, respectively. For both networks, the learning rate is set to 0.01. The batch size is set to 64, and the training epoch number is set to 50. We adopt Adam as an optimizer, and \re{experiments are performed on an RTX 3080 Ti 12GB GPU and an Intel i9 10900k CPU.}

\begin{table}[htbp]
\centering
\caption{Classification results on concentric hyperspheres and the number of quadratic and conventional neurons used in classification.}
\scalebox{0.95}{
\begin{tabular}{l|c|c|c|c|c}
\hline
Dimensions     & 3               & 10              & 20              & 100             & 200             \\ \hline \hline
\# Quadratic neurons    & \textbf{1}      & \textbf{1}      & \textbf{1}      & \textbf{1}      & \textbf{1}      \\ \hline
Accuracy (\%) & \textbf{100.00} & \textbf{100.00} & \textbf{100.00} & \textbf{100.00} & \textbf{100.00} \\ \hline
\# Conventional neurons    & 8               & 40              & 150             & 350             & 700             \\ \hline
Accuracy (\%) & \textbf{100.00} & \textbf{100.00} & 99.75           & 88.25           & 82.75           \\ \hline
\end{tabular}}
\label{tab:circle}
\end{table}

First, classification results of concentric hyperspheres are shown in Table \ref{tab:circle}, wherein the efficiency of quadratic networks over the conventional counterpart is fully manifested. Regardless of dimensions, one quadratic neuron is always sufficient to classify concentric hyperspheres with no errors. In contrast, conventional networks cannot realize a complete classification in high-dimensional space, though it uses a large number of neurons. This is due to the curse of dimensionality, \textit{i.e.}, in high-dimensional space, it takes an exponential number of conventional neurons to generate a closed boundary to encompass the inner hypersphere. Next, as visualized in Figure \ref{fig:syn}, the decision boundary of a single-neuron quadratic network is a smooth circle, but the single-neuron conventional one fails. While a six-neuron conventional network can do the job, its decision boundary is a piecewise linear function, unnatural for the rings.


\begin{figure}[htbp]
    \centering
    \includegraphics[width=\linewidth]{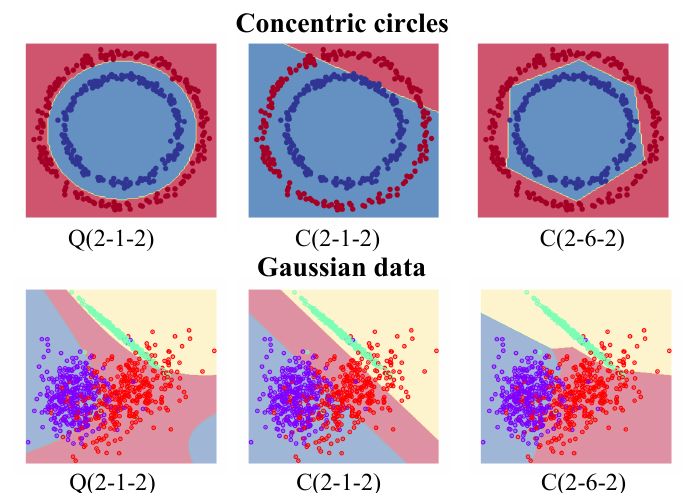}
    \caption{Decision boundaries of quadratic and conventional neural network for classifying circles and Gaussian mixture data. Q(2-$x$-2) is a one-hidden-layer quadratic neural network with $x$ neurons, while C(2-$x$-2) is a one-hidden-layer conventional neural network with $x$ neurons.}
    \label{fig:syn}
\end{figure}

\subsection{Gaussian Mixture Data}

We generate multi-category Gaussian mixture data by utilizing the function $make\_classification()$ in the scikit-learn library \cite{pedregosa2011scikit} of Python. The Gaussian distribution of each cluster is with a mean of $0$ and a standard deviation of $1$. \re{ A total of 5000 samples are divided into 3 classes for two-dimensional space visualization and 10 classes for higher dimensions to simulate complicated situations.} We employ the fully-connected one-hidden-layer ReLU neural networks to classify the Gaussian mixture data. The learning rate is set to 0.01. The batch size is set to 64. The number of training epochs is 200. For quadratic networks, we adopt the ReLinear training strategy \cite{fan2021expressivity}, where the learning rate of quadratic terms is set to $1\times10^{-4}$. We adopt Adam as an optimizer, and \re{experiments are performed on an RTX 3080 Ti 12GB GPU and an Intel i9 10900k CPU.}

\begin{table}[h]
\centering
\caption{Classification results on high-dimensional Gaussian mixture data. }
\begin{tabular}{|lccc|}
\hline
\multicolumn{1}{|l|}{Structure}        & \multicolumn{1}{c|}{\#Param} & \multicolumn{1}{c|}{\#FLOPs} & ACC(\%)         \\ \hline \hline
\multicolumn{1}{|l|}{C(20-150-10)}     & \multicolumn{1}{c|}{4.66K}  & \multicolumn{1}{c|}{4.50K}  & 79.60          \\ 
\multicolumn{1}{|l|}{C(20-150-100-10)} & \multicolumn{1}{c|}{19.26K} & \multicolumn{1}{c|}{19.00K} & 79.20          \\ 
\multicolumn{1}{|l|}{Q(20-30-10)}      & \multicolumn{1}{c|}{2.82K}  & \multicolumn{1}{c|}{2.70K}  & \textbf{82.00} \\ \hline \hline
\multicolumn{1}{|l|}{C(500-90-10)}     & \multicolumn{1}{c|}{46.00K} & \multicolumn{1}{c|}{45.90K} & 63.80          \\ 
\multicolumn{1}{|l|}{C(500-120-10)}    & \multicolumn{1}{c|}{61.33K} & \multicolumn{1}{c|}{61.20K} & 66.60          \\ 
\multicolumn{1}{|l|}{Q(500-30-10)}     & \multicolumn{1}{c|}{46.02K} & \multicolumn{1}{c|}{45.90K} & \textbf{67.40} \\ \hline
\end{tabular}
\label{tab:gaussian}
\end{table}

First, as shown in Figure \ref{fig:syn}, using quadratic neurons can naturally form smooth nonlinear decision boundaries, which achieves more precise distinguishment than conventional neurons whose boundaries are piecewise linear. The conventional network requires more neurons to realize the same level of precise distinguishment. Furthermore, classification results are shown in Table~\ref{tab:gaussian}. Given the same number of neurons, quadratic networks always outperform conventional ones, \textit{i.e.}, the accuracy of quadratic networks to classify 20-dimensional Gaussian data is 7.2\% higher than its counterpart. On the 500-dimensional data, the quadratic one is 10\% higher. Even when much more neurons are used, the conventional networks' performance is still inferior to quadratic networks. Considering that the Gaussian mixture hypothesis is widely deemed effective in machine learning, our results suggest that a quadratic network is efficient in broad applications.

\section{Public Benchmark Experiments}

\subsection{Image Recognition}
Image recognition over the ImageNet \cite{deng2009imagenet} is the gold standard to evaluate a model. Here we also apply quadratic networks to the image recognition task to verify the parametric efficiency. 
The ImageNet-1K \cite{deng2009imagenet} benchmark has 1.28M training images and 50K validation images from 1K classes. We use the MMCLS \cite{2020mmclassification} as the basic implementation environment. We employ an AdamW optimizer for 100 epochs using a cosine decay learning rate scheduler and 20 epochs of linear warm-up. A batch size of 1024, an initial learning rate of 0.001, and a  weight decay of 0.05 are used. For a fair comparison, we use the same hyperparameters for the original ResNet-50 \cite{he2015deep} and our model. The quadratic ResNet model is obtained by removing six layers in the fourth stage of the original ResNet-50 with a $3$-layer QNN. 

The comparative result is reported in Table~\ref{tab:imagenet}. It is seen that a quadratic network with fewer parameters can achieve slightly better top-1 and top-5 accuracy than the conventional network. Since both the improvement and parameter saving are only moderate, the parametric efficiency of quadratic networks in this experiment is also moderate. We argue that the parametric efficiency of quadratic networks holds in general, but how much parametric efficiency can be gained actually depends on the task.

\begin{table}[! hbt]
    \centering
       \caption{Classification accuracy on the ImageNet.}
    \begin{tabular}{|c|c|c|c|}
    \hline 
    Model &  Params & Top 1 (\%) & Top 5 (\%)\\
    \hline \hline
    ResNet-50 & 25M & 75.3 & 92.4 \\
    
    QResNet-47 & 24M & 75.4 & 92.5 \\
    \hline
    \end{tabular}
    \label{tab:imagenet}
\end{table}

\subsection{Point Cloud Segmentation}

In recent years, point clouds have gained lots of traction due to their broad applications as a fundamental object representation in three-dimensional space. Here, we evaluate if the parametric efficiency of quadratic networks holds over point cloud-related tasks. We select S3DIS \cite{armeni20163d}, which is a widely adopted point cloud semantic segmentation benchmark, to test. The proposed quadratic network is a drop-in replacement to the classic point cloud models: DGCNN \cite{dgcnn} and PointNet \cite{qi2017pointnet} in the point cloud. For fair comparisons, we only replace the linear aggregation with the quadratic one without changing other machinery. We use the widely-adopted average accuracy (mAcc) and IOU (mIOU) as evaluation metrics. For S3DIS, following the protocol of previous work \cite{dgcnn, qi2017pointnet, zhang-shellnet-iccv19, qi2017pointnet++}, the batch size is set to 32. We train our model for 100 epochs with 4 Tesla V100 GPUs. Following standard practice, the raw input points are firstly grid sampled to generate $4,096$ points. Unless otherwise specified, we use scale and jitter as data augmentation. For ScanNet, we train for 100 epochs with weight decay and batch size set to 0.1 and 32, respectively. The number of input points is set to be $8,192$ by sampling. Except for random jitter, the data augmentation is the same with S3DIS.

\begin{table}[!htb]
    \centering
        \caption{The performance improvement over S3DIS by quadratic networks is due to the nonlinear aggregation instead of the employment of more parameters. }
    \begin{tabular}{|c|c|c|c|}
    \hline
    Method & $\#$ of params & mAcc & mIOU \\
    \hline \hline
    PointNet (Original) & 3.53M & 53.1 & 45.0 \\
    PointNet (Wider) & 3.95M & 53.4 & 45.2\\
    PointNet (Deeper) & 3.87M & 54.9 & 46.1\\
    PointNet+Ours & 3.81M & 56.5 & 48.1\\
    \hline \hline
    DGCNN (Original) & 0.98M & 57.0 & 49.3\\
    DGCNN (Wider) & 1.53M & 57.5 & 49.6\\
    DGCNN (Deeper) & 1.45M & 59.1 & 49.9\\
    DGCNN+Ours & 1.42M & 61.9 & 50.6\\
    \hline
    \end{tabular}
    \label{tab: complexity}
\end{table}

The comparison results are shown in Table \ref{tab: complexity}, where we increase the number of parameters in  PointNet \cite{qi2017pointnet} and DGCNN \cite{dgcnn} by either using more feature channels (`Wider') or stacking more layers (`Deeper'). The results show that the quadratic network outperforms the wider and deeper PointNet\&DGCNN, although wider and deeper PointNet\&DGCNN have more parameters. More favorably, the performance of quadratic networks leads its counterparts by a large margin. We argue that the nonlinear aggregation in the quadratic network can facilitate the extraction of useful features and result in performance improvement.

\begin{figure}[!tb]
    \centering
    \includegraphics[width=0.9\linewidth]{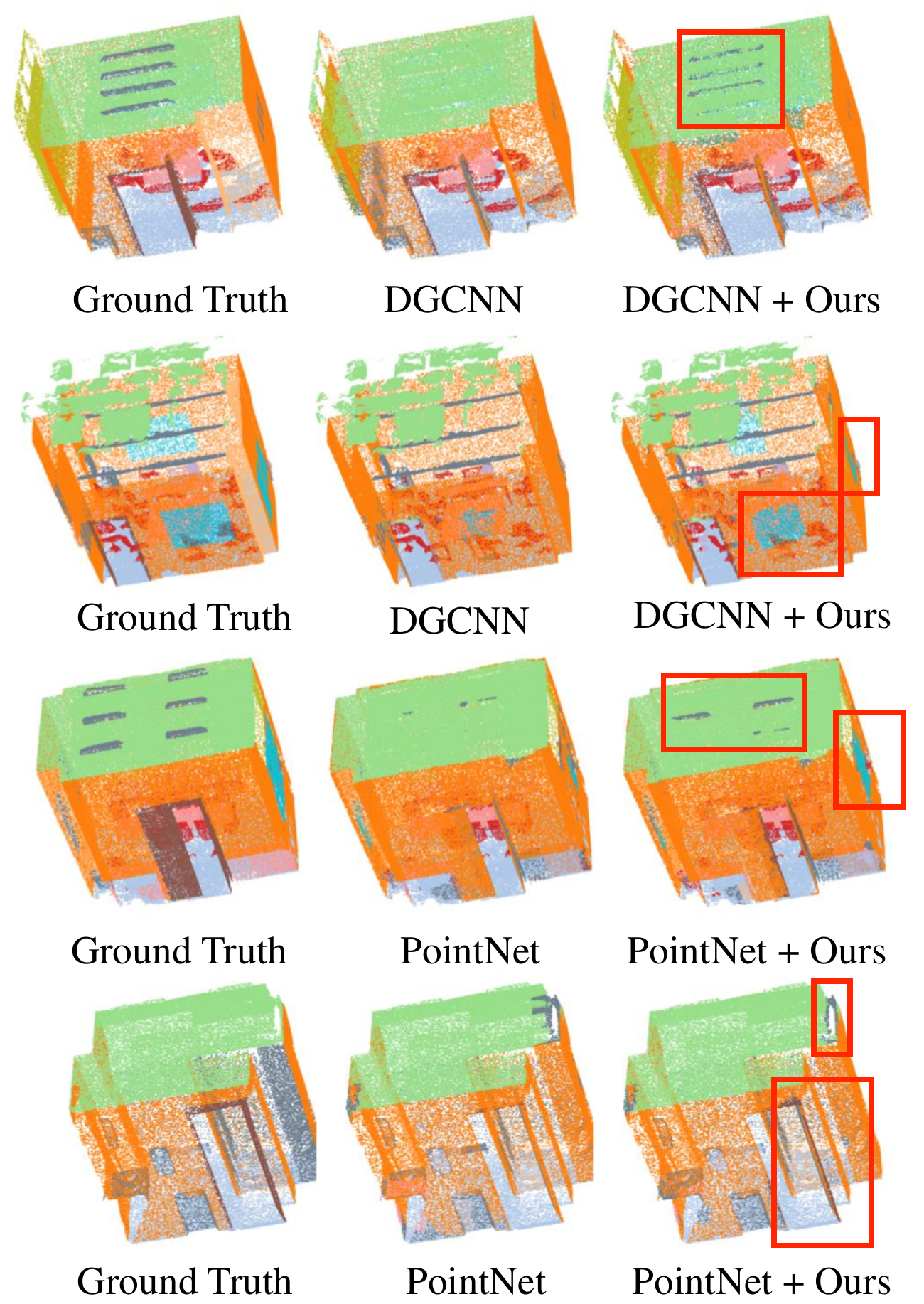}
    \caption{The model empowered with our RISE-Cloud show superior segmentation accuracy over the original one.}
    \label{fig: example}
\end{figure}


\section{Real-world Applications}

\subsection{Efficiency on Bearing Fault Diagnosis}

Ensuring the reliability of rotating machines such as wind turbines
and aircraft engines is a critical issue in industrial fields, which can avoid tremendous economic loss. According to the statistics \cite{bonnett2008increased}, bearing faults account for as much as 70\% electromagnetic drive system failures. Therefore, it is of great importance to efficiently and effectively identify the sources of bearing faults. Here, we use a quadratic convolutional neural network (QCNN) to solve the bearing fault diagnosis problem. Such a problem is reduced to classifying faults into different categories. Unlike image classification, bearing fault diagnosis models process temporal vibration signals measured by acceleration sensors. Thus, these models are 1D convolutional neural networks. 

\begin{figure}[htbp]
    \centering
    \includegraphics[width=\linewidth]{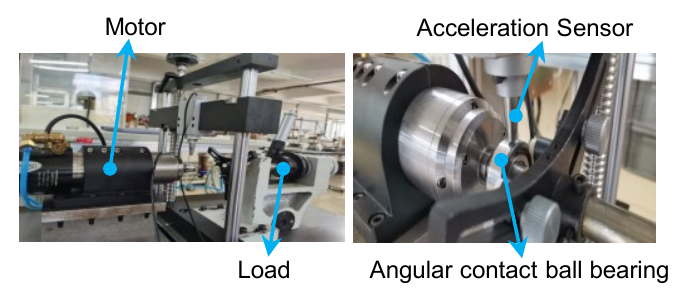}
    \caption{The test rig of angular contact ball bearings.}
    \label{fig:HITdataset}
\end{figure}

\textbf{Dataset preparation.} Our collaborators have collected the angular contact ball bearing dataset. The selected bearing is HC7003, which is a common high-speed spindle bearing. The bearing signal was collected in MIIT Key Laboratory of Aerospace Bearing Technology and Equipment, Harbin Institute of Technology. The fault was injected into the bearing using laser engraving at the outer race (OR), inner race (IR), and ball with 1-3 levels (minor, moderate, severe). Bearing faults are cracks of the same size but different depths. The deeper, the more severe the fault is. As a result, bearing signals are attributed to ten classes \{"healthy", "ball cracking (minor)", "ball cracking (moderate)", "ball cracking (severe)", "outer race cracking (minor)", "outer race cracking (moderate)", "outer race cracking (severe)", "inner race (minor)", "inner race (moderate)", "inner race (severe)"\}. Figure \ref{fig:HITdataset} shows the test rig. In the test, the constant motor speed (1800 r/min) was set, and NI USB-6002 was used to acquire vibration signals with the 12kHz sampling rate. We record 47s of bearing vibration (561,152 points per category). Figure \ref{fig:raw} shows the raw signals in the time domain with respect to the ten classes of our dataset.


\begin{figure}[h]
    \centering
    \includegraphics{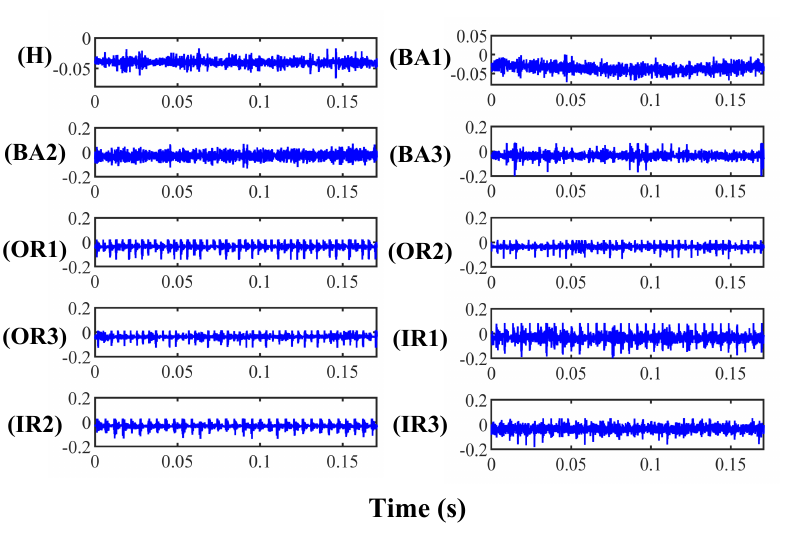}
    \caption{\ff{Raw signals with respect to ten classes of our dataset.}}
    \label{fig:raw}
\end{figure}

\textbf{Experimental setups.} We use the state-of-the-art 1D-CNN models for comparison, \textit{e.g.}, DCA-BiGRU \cite{zhang2022fault}, AResNet \cite{zhongdevelopment}, RNN-WDCNN \cite{shenfield2020novel}, MA1DCNN \cite{wang2019understanding}, and WDCNN \cite{zhang2017new}. All these models except WDCNN incorporate extra modules to the CNN backbone, \textit{e.g.}, attention module (AResNet, MA1DCNN) and recurrent convolutional network (RNN-WDCNN). The QCNN shares the same structure as WDCNN but introduces quadratic neurons in convolutional layers. For a full exploitation of data, each signal is sliced into 2,048 points as the input. The data set is split into the training, validation, and test sets with a ratio of 0.5: 0.25: 0.25. The hyperparameters of all methods are specified as follows: the learning rate is 0.3; the batch size is 64; the number of training epochs is 200. Moreover, we adopt Adam as an optimizer. For QCNN, the learning rate of quadratic terms in the ReLinear strategy is set to 0.1. All methods are executed ten times to compute the mean and variance, and experiments are performed on an RTX 3080 Ti 12GB GPU. 

\textbf{Experimental results.} The sizes of different models and models' performance are listed in Table \ref{tab:bearing_results}. First, due to the employment of extra modules such as attention module and RNNs, DCA-BiGRU, AResNet, RNN-WDCNN, and MA1DCNN have a large number of parameters. But it seems not to bring significant performance improvement. Second, QCNN achieves the highest accuracy and the lowest variance among all models. As opposed to WDCNN, QCNN performs better, despite that QCNN has 5.5 times fewer parameters and 0.18 times fewer FLOPs. Relative to other state-of-the-art competitors, QCNN is much more compact and uses much fewer FLOPs.
Therefore, we summarize that QCNN enjoys high parametric efficiency in the bearing fault diagnosis task.

\begin{table}[htbp]
\centering
\caption{The sizes of different models and models' average accuracy ($\%$) on the angular contact ball bearing dataset.}
\scalebox{0.9}{
\begin{tabular}{|l|c|c|c|c|}
\hline
Model     & CNN Channel        & \ff{\#Params} & \#FLOPs  & Accuracy(\%)                       \\ \hline \hline
DCA-BiGRU & -                  & 150.02K  & 1220.01M & 97.42$\pm$1.39                     \\
AResNet   & -                  & 2650.11K & 260.40M  & 96.14$\pm$2.10                      \\
RNN-WDCNN & -                  & 554.10K  & 54.30M   & 91.48$\pm$3.32                      \\
MA1DCNN   & -                  & 324.68K  & 299.70M  & 93.12$\pm$3.10                      \\ 
WDCNN     & {[}16,32,4*64{]}   & 66.79K   & 1.61M    & 97.60$\pm$1.46                      \\ 
QCNN    & {[}6*16{]}         & 16.50K   & 1.39M    & \textbf{98.80$\pm$0.85}               \\
 \hline
\end{tabular}}
\label{tab:bearing_results}
\end{table}


\subsection{Efficiency on the Car Dataset}


Car dealers need high-resolution, standardized, multi-angle car photos to showcase their products to customers. However, the sophisticated photographic environment often makes it challenging to obtain the ideal photo in a single shot. This necessitates professional image editors to post-process photos for better image quality. To do so, automatic and accurate car segmentation is an important intermediate step for the sake of reducing manual labor and boosting efficiency. A good car segmentation algorithm can substantially improve the quality of the final product and finally expedite sales. Here, we evaluate if a quadratic network has the parametric efficiency on the car segmentation task.

\textbf{Experimental setups.} The dataset used in this task was obtained from the Carvana Image Masking Challenge \cite{carvana}. For convenience, we call this dataset the Carvana dataset. Carvana, an online car company, collected images of 318 unique vehicles, each with 16 standard rotating images. The images have a high resolution of 1918 $\times$ 1080, but the presence of bright reflections and similar color backgrounds, as illustrated in Figure \ref{fig:carimg}, pose challenges for traditional automatic segmentation methods. The objective of the challenge is to develop an advanced model that can accurately segment a single car from images. 

\begin{figure}
    \centering
    \includegraphics{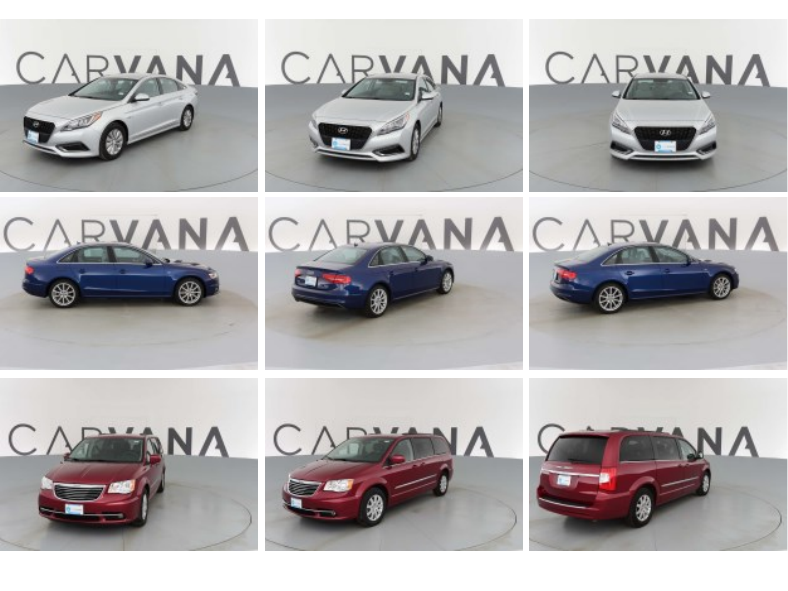}
    \caption{Images from the Carvana dataset. }
    \label{fig:carimg}
\end{figure}

As demonstrated by numerous experiments, the UNet can deliver outstanding performance in image segmentation. Therefore, we choose UNet++ \cite{zhou2018unet++} and R2UNet \cite{alom2018recurrent} as baselines and accordingly develop QUNet models by replacing neurons in UNet with quadratic neurons. Because the original image is too large, we follow the preprocessing method of some achieved top-level teams that resize each image into $960 \times 640$ \cite{iglovikov2018ternausnet}. The data set is split into training, validation, and test sets with a ratio of 0.8: 0.1: 0.1. The hyperparameters of all methods are the same for a fair comparison. Specifically, the learning rate is $1 \times 10^{-5}$ and follows a cosine annealing schedule \cite{loshchilov2016sgdr}; the batch size is 2; the number of training epochs is 5. In addition, we use RMSProp as an optimizer to train baseline methods and Adam for QUNet. In particular, for QUNet, we adopt the ReLinear strategy, where the learning rate of quadratic terms is set to $10^{-4}$. We also employ the gradient clip norm method with a maximum norm value of 0.01 to constrain the over-growth of weights.

\begin{table}[htbp]
\centering
\caption{The sizes and FLOPs of different models and models' segmentation performance on the Carvana dataset.}
\scalebox{0.83}{
\begin{tabular}{|l|c|c|c|c|c|}
\hline 
Method  & UNet Blocks           & \#Params & \#FLOPs & mACC(\%) & mIOU(\%) \\ \hline \hline
UNet++& [32,64,128,256,512]   & 9.16M   & 0.33T  & 99.01& 95.62  \\
R2UNet  & [64,128,256,512,1024] & 39.09M  & 1.43T  & 99.08   & 95.41   \\
QUNet & [32,64,128,256]       & 5.78M   & 0.26T  & 99.59   & 98.08   \\ \hline
\end{tabular}}
\label{tab:car}
\end{table}


\textbf{Experimental results.} Table \ref{tab:car} summarizes the sizes and FLOPs of different models and models' segmentation performance on the Carvana dataset. The performance metrics are the mean accuracy (mACC) and the mean intersection over union (mIOU). First, all baselines perform satisfactorily because this task is not extremely difficult. There is only a single target in the image, and the distinction between a car and the background is salient.
Second, our QUNet model reduces the number of parameters and FLOPs by cutting the number of blocks in all stages. Notwithstanding, QUNet still outperforms its competitors, \textit{i.e.,} the gain in mIOU is even over 2.4\%. At last, as visualized in Figure \ref{fig:car}, R2UNet can not precisely retain the edges in segmentation, \textit{e.g.}, the car's tires have a visible chink, while UNet++ does not identify the entire antenna clearly. In contrast, the segmentation of QUNet is close to the authentic mask. We argue that quadratic networks have parametric efficiency in handling the car segmentation task.

\begin{figure}
    \centering
    \includegraphics{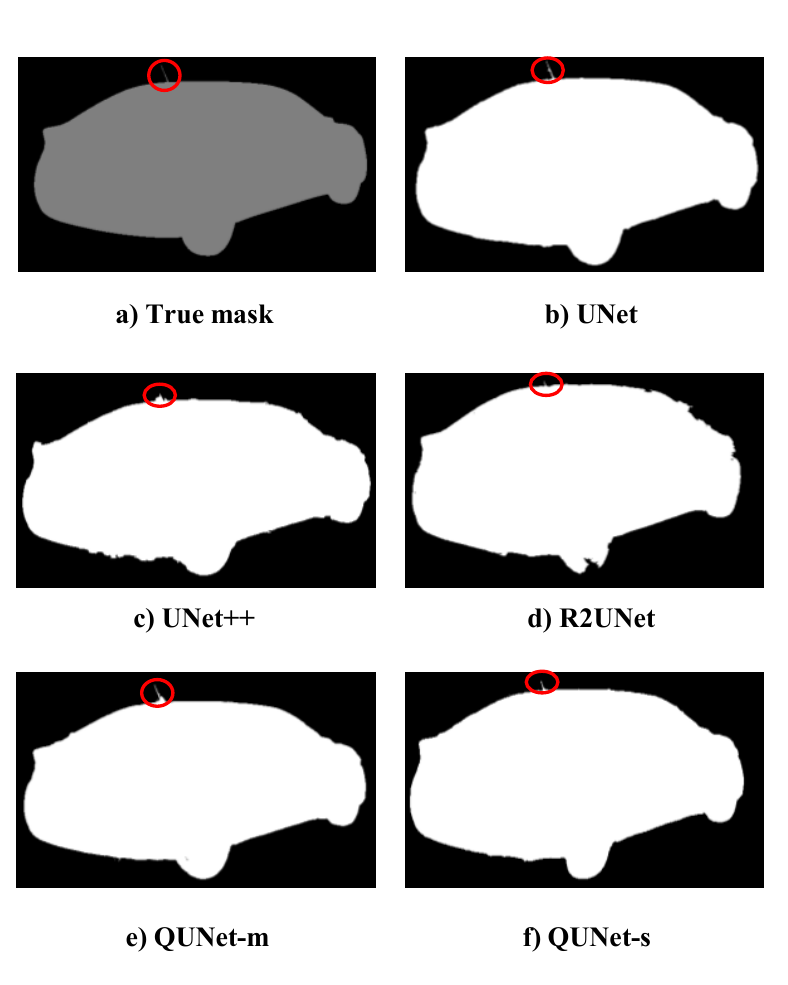}
    \caption{\ff{Car segmentation image of five compared methods.}}
    \label{fig:car}
\end{figure}

\subsection{Efficiency on the Cell Dataset}

Cell segmentation is a crucial task in the field of microscopic image analysis, pertaining to the identification and localization of the boundaries of cells within an image \cite{Rohan2021}. A well-performed cell segmentation algorithm can facilitate numerous downstream biomedical studies, and is a cornerstone technology of cellular image-based analysis \cite{lux2020cell}. The recent advances in deep learning models, specifically the UNet architecture, have given rise to huge progress in this field. Here, we evaluate if a quadratic network has the parametric efficiency on the car segmentation task.

\textbf{Experimental setups.} The Data Science Bowl 2018 (DSB2018) dataset contains 670 segmented nuclei images acquired under a variety of conditions. The goal is to find the nuclei in images to assist post hoc medical analysis. These nuclei come from various cell types \cite{DSB2018}. As opposed to the car segmentation in the Carvana dataset, the image of DSB2018 contains a large number of small cell nuclei, which is a typical small target segmentation problem and
requires a model to delicately tell each nucleus. All images are compressed to the size of $256 \times 256$ such that they can be fed directly to the neural network. 

In this experiment, we choose ResNet34UNet \cite{ronneberger2015u}, UNet++ \cite{zhou2018unet++}, ChannelUNet \cite{chen2019channel}, AttentionUNet, R2UNet \cite{alom2018recurrent}, and FCN \cite{sun2018fully} as baselines and accordingly design a QUNet by  replacing neurons in UNet with quadratic neuron. The DSB2018 dataset is split into the training, validation, and test sets with a ratio of 0.8: 0.1: 0.1. The hyperparameters of all methods are the same for a fair comparison. The learning rate, the batch size, and the number of epochs are set to $10^{-3}$, 8, and 20, respectively. we use Adam as an optimizer for training all methods. In particular, for QUNet, we adopt the ReLinear strategy as \cite{fan2021expressivity}, where the learning rate of quadratic terms is set to $1\times10^{-4}$. We also employ the gradient clip norm method with a maximum norm value of 0.01 to constrain the over-growth of weights. 

\textbf{Experimental results.} We summarize the segmentation results and models' parameters and FLOPs in Table \ref{tab:cell}. First, QUNet delivers the superior performance compared to its counterparts. It has the highest mACC and mIOU and the fewest parameters, which strongly confirms the high efficiency of QUNet. Quadratic neurons have smoother edges when approximating circular regions, therefore, quadratic networks are more natural in segmenting circular features such as cells. Second, to visually appreciate the performance of different algorithms, the segmentation results of different models are shown in Figure \ref{fig:cell}. In the larger circle, there is an extra mask that should not appear. ChannelUNet and AttentionUNet misjudge the spots and mark them as a cell. R2UNet fails to segment entire cells. The other methods perfectly segment this group of cells. Furthermore, in the smaller circle, two cells are so close that masks of Res34UNet, UNet++,  ChannelUNet, AttentionUNet, and FCN32s appear to be adherent. Only R2UNet and QUNet exhibit distinguishable boundaries. It suggests that QUNet has reliable performance for different scenes of cell segmentation.

\begin{table}[htbp]
\centering
\caption{Segmentation results and models' parameters and FLOPs on DSB2018 dataset.}
\scalebox{0.80}{
\begin{tabular}{|l|c|c|c|c|c|}
\hline
Method        & UNet Blocks           & \#Params &\#FLOPs & mACC(\%) & mIOU(\%) \\ \hline
ResNet34UNet     & [64,128,256,512]      & 21.66M  & 6.11G  & 96.68   & 80.23   \\
UNet++        & [32,64,128,256,512]   & 9.16M   & 34.76G & 96.47   & 80.36   \\
ChannelUNet   & [64,128,256,512,1024] & 49.15M  & 76.32G & 96.84   & 82.66   \\
AttentionUNet & [64,128,256,512,1024] & 34.88M  & 66.68G & 96.90    & 81.72   \\
R2UNet        & [64,128,256,512,1024] & 39.09M  & 0.15T  & 94.74   & 72.17   \\
FCN32s        & [32,64,128,256,512]   & 18.64M  & 22.5G  & 96.39   & 79.37   \\
QUNet         & [32,64,128,256]       & 5.78M   & 27.73G & 97.15   & 82.16   \\ \hline
\end{tabular}}
\label{tab:cell}
\end{table}

\begin{figure}
    \centering
    \includegraphics{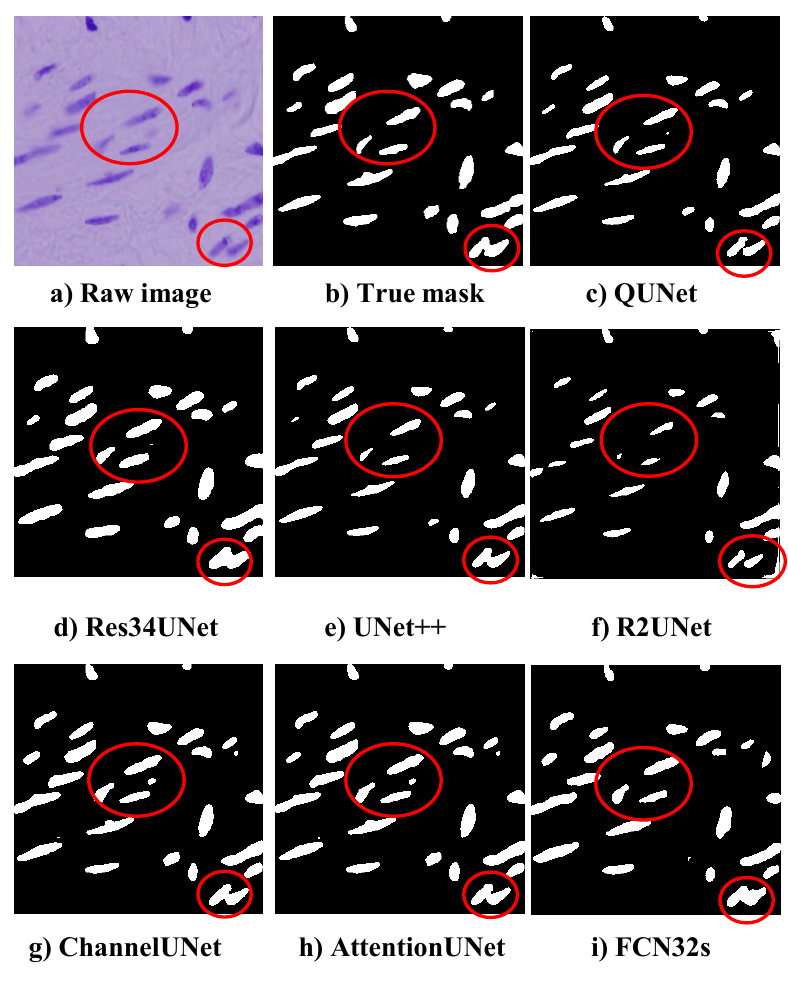}
    \caption{Cells segmentation image of seven compared methods.}
    \label{fig:cell}
\end{figure}

\subsection{Efficiency on the Agriculture Image Dataset}
\textbf{Experimental setups.} CVPPP Leaf Segmentation dataset \cite{scharr2014annotated} contains well-annotated leaf images, which is used to benchmark the quality of image segmentation tasks in the plant ing field. In detail, two datasets show different genotypes of Arabidopsis, while another shows tobacco (Nicoticana tobacum) under different treatments. During the training process, the input images are randomly resized in the range of X and X and then cropped into the size of X by X before being fed into the networks. During the inference time, the input images are directly fed into the networks without any augmentation pipeline as the training process. Regarding the training/test split,  A1 and A4 folders are combined, which contain (512 + 2497) images, are used for training, while A2 and A3 (125 + 109) images for testing.

\textbf{Experiments results.} We summarize the segmentation results and models' parameters in Table \ref{tab: leafSegmentation}. From the table, when integrated with our proposed methods, the performance on the evaluation metrics like mIOU, F1, Recall and Precision can be improved by a large margin. Figure \ref{fig: leaf1} shows an example of visualization comparison of different models. From the figure, when integrated with our proposed method, the segmentation near the stems of the leaves can be improved.

\begin{figure}[! bt]
    \centering
    \includegraphics[width=8cm]{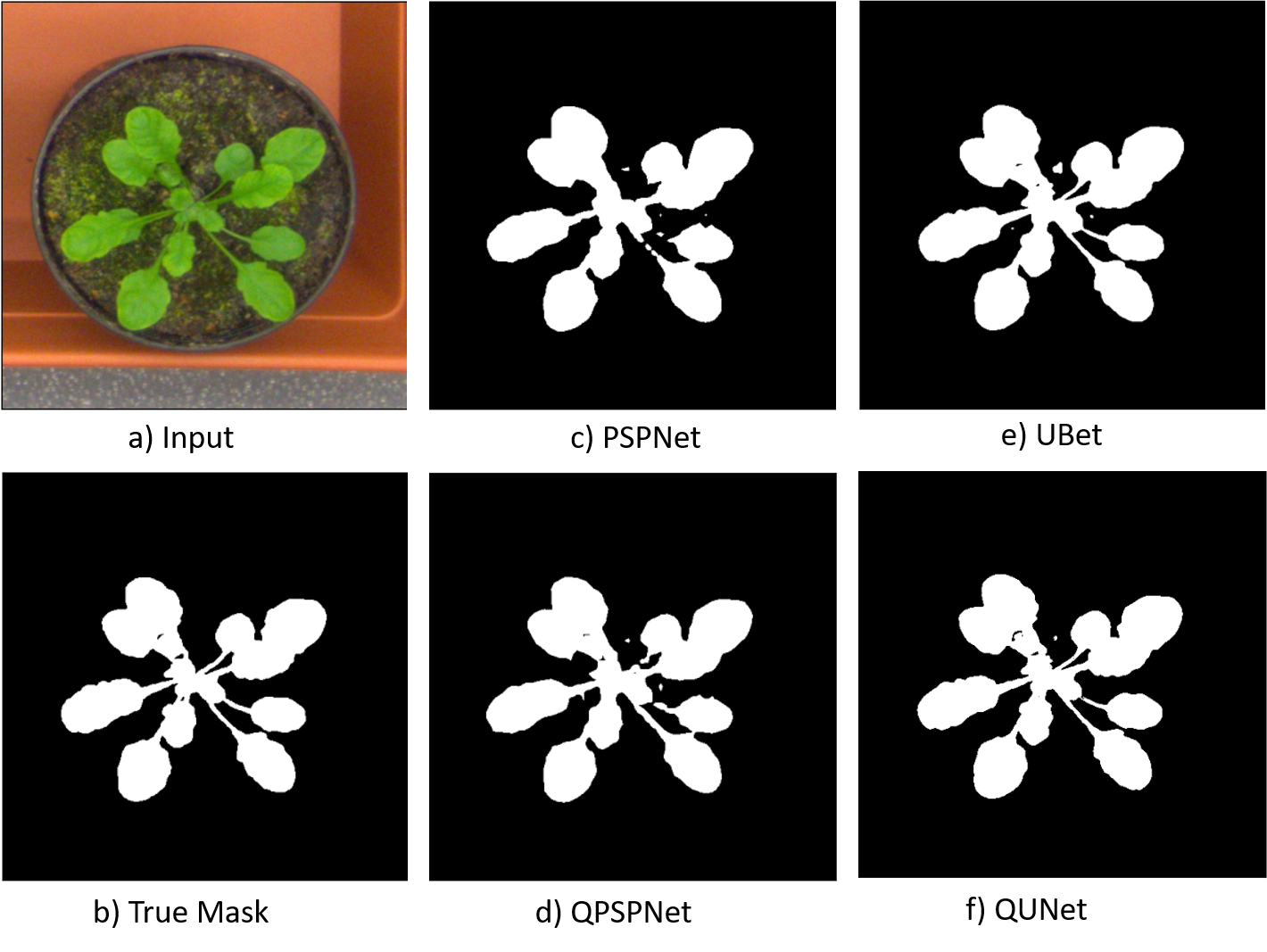}
    \caption{Leaf segmentation image of four compared methods.}
    \label{fig: leaf1}
\end{figure}

\begin{table}[! bt]
    \centering
    \caption{Leaf segmentation}
    \scalebox{0.80}{
    \begin{tabular}{|c|c|c|c|c|c|c|}
    \hline
    Model & Param & mIOU & F1 & Recall & Accuracy & Precision \\
    \hline \hline
    Unet & 25.81M & 80.58$\%$ & 88.13$\%$ & 94.45$\%$ & 99.32$\%$ & 84.76$\%$\\
    QUnet & 24.45M & 83.40$\%$ & 90.08$\%$ & 89.65$\%$ & 99.48$\%$ & 92.72$\%$\\
    \hline \hline
    PSPNet & 21.44M & 76.54$\%$ & 85.43$\%$ & 94.31$\%$ & 99.27$\%$ & 81.06$\%$\\
    QPSPNet & 21.40M &  82.11$\%$ & 89.18 $\%$ & 89.44$\%$ & 99.50$\%$ & 89.88$\%$\\
    \hline \hline
    Unet++ & 27.73M & 77.32$\%$ & 86.10$\%$ & 93.56$\%$ & 99.22$\%$ & 83.28$\%$\\
    QUnet++ & 26.09M & 83.27$\%$ & 90.34$\%$ & 93.48$\%$ & 99.35$\%$ & 88.87$\%$\\
    \hline 
    \end{tabular}
    }
    \label{tab: leafSegmentation}
\end{table}

\section{Discussion and Conclusion}

\ff{summarizes the experiments}

\begin{table}[htbp]
\centering
\caption{Tasks and the corresponding achieved parametric efficiency.}
\scalebox{0.9}{
\begin{tabular}{|l|l|}
\hline
Tasks           & Parametric efficiency         \\ \hline
Concentric hyperspheres & $\bigstar$ $\bigstar$ $\bigstar$ $\bigstar$ $\bigstar$             \\ \hline 
Gaussian mixture data & $\bigstar$ $\bigstar$ $\bigstar$ $\bigstar$               \\ \hline    
ImageNet & $\bigstar$               \\ \hline
Point cloud segmentation & $\bigstar$ $\bigstar$ $\bigstar$ \\ \hline
Bearing fault diagnosis & $\bigstar$ $\bigstar$ $\bigstar$ $\bigstar$ \\ \hline
Car data & $\bigstar$ $\bigstar$ $\bigstar$ \\ \hline
Cell data & $\bigstar$ $\bigstar$ $\bigstar$ $\bigstar$               \\ \hline 
Agricultural data &  $\bigstar$ $\bigstar$ $\bigstar$ $\bigstar$ $\bigstar$               \\ \hline 
\end{tabular}}
\label{tab:experiments_summary}
\end{table}

In this article, we have adequately verified the efficiency of quadratic networks both theoretically and experimentally. We have derived approximation problems for quadratic networks with functions on real space and manifolds, thereby proving that a ReLU quadratic network can approximate a class of functions lying in the unit ball of the Sobolev spaces more efficiently than a ReLU conventional one. Similar results have been also extended to a manifold. Furthermore, we have shown the efficiency of quadratic networks from the perspective of Barron space. We have empirically verified the efficiency of quadratic networks through synthetic experiments, benchmarks, and real-world applications. It has been noticed that quadratic networks are good at circular features. In the future, more efforts should be invested into exploring other advantages of quadratic networks.








%
\bibliographystyle{ieeetr}
\bibliography{bio.bib}

%








\end{document}